\newtheorem{theorem}{Theorem}
\newtheorem{corollary}{Corollary}[theorem] 
\newtheorem{lemma}[theorem]{Lemma}
\theoremstyle{definition}
\theoremstyle{remark}
\begin{document}
%
\title{3D traffic flow model for UAVs}
%
%
%

\author{Mirmojtaba~Gharibi, Raouf~Boutaba,~\IEEEmembership{Fellow,~IEEE,}
        and~Steven~L.~Waslander,~\IEEEmembership{Senior Member,~IEEE}
\thanks{}
\thanks{M. Gharibi and R. Boutaba are with D. Cheriton School
of Computer Science, University of Waterloo, Waterloo,
ON, N2L 3G1, Canada. Contacts: mgharibi@uwaterloo.ca and rboutaba@uwaterloo.ca respectively.}
\thanks{S. L. Waslander S. L. Waslander is with the Institute for Aerospace Studies, University of
Toronto, Toronto, ON M3H 5T6, Canada, stevenw@utias.utoronto.ca}
}

%
%

\markboth{Gharibi \MakeLowercase{\textit{et al.}}: 3D traffic flow model for UAVs }%
{Gharibi \MakeLowercase{\textit{et al.}}: 3D traffic flow model for UAVs }
%



\maketitle

\begin{abstract}
In this work, we introduce a microscopic traffic flow model called Scalar Capacity Model (SCM) which can be used to study the formation of traffic on an airway link for autonomous Unmanned Aerial Vehicles (UAV) as well as for the ground vehicles on the road. Given the 3D nature of UAV flights, the main novelty in our model is to eliminate the commonly used notion of lanes and replace it with a notion of density and capacity of flow, but in such a way that individual vehicle motions can still be modeled. We name this a Density/Capacity View (DCV) of the link capacity and how vehicles utilize it versus the traditional One/Multi-Lane View (OMV). An interesting feature of this model is exhibiting both passing and blocking regimes (analogous to multi-lane or single-lane) depending on the set scalar parameter for capacity. We show the model has linear local (platoon) and string stability. Also, we perform numerical simulations and show evidence for non-linear stability. Our traffic flow model is represented by a nonlinear differential equation which we transform into a linear form. This makes our model analytically solvable in the blocking regime and piece-wise analytically solvable in the passing regime.
\end{abstract}

\begin{IEEEkeywords}
Microscopic Traffic Flow Model, UAV Traffic Flow Model, Ground Vehicle Traffic Flow Model, Internet of Drones (IoD), Air Traffic Control (ATC), Low Altitude Air Traffic Management, Unmanned Aerial Vehicle (UAV), Unmanned Aircraft System (UAS) Traffic Management (UTM).
\end{IEEEkeywords}

%
\IEEEpeerreviewmaketitle

\section{Introduction}\label{sec:introduction}
 

%
%
%
%

\IEEEPARstart{T}{he Unmanned Aerial Vehicles (UAV) } will soon be common place. They will do a variety of tasks such as on-demand aerial package delivery, search and rescue operations, agriculture, cinematography, inspection of infrastructure, and wild life and traffic surveillance \cite{Gha16}. However, this is a field that is still in its infancy and main ideas for integration of UAVs in the airspace are just starting to appear \cite{Gha16,Kou17,Waz18,Kha18,UTM15,Kop15,UTM_fact15}. To enable such a reality, various technical tools are needed, including traffic flow models over a single link to study the formation of congestion in the air.

The goal of traffic flow research is to study the interaction between the vehicles and the transportation network and design efficient transportation networks from the learned insights. These insights are often conceptualized via mathematical modelling. In their traditional domain of ground vehicles, traffic flow models help with understanding the formation of traffic jams as a result of various flow conditions, driving behaviors, road structures such as on-ramps and off-ramps, etc. They will play an analogous role for UAVs. 

Developing microscopic traffic flow modeling for UAVs is a new problem with its unique set of requirements. The closest related research area we can look for solutions is that of traffic flow models for ground vehicles. As we will see, even the limited existing works on UAV traffic flow models are adaptations of ground vehicle traffic flow models. A main characteristic of traffic flow models for ground vehicles is that they structure the road into one or multi lanes and allow the movement of vehicles in this 2D space \cite{Jia01}. We call this general view of the modelling  One/Multi-lane View (OMV). Within OMV, in the simpler case of one lane, no passing occurs. Most models are first introduced as one lane models  and then with the aid of a separate lane changing model are extended to multi lane models \cite{Jia01, Kes07,Tol03}. 

An OMV-based model is limited in its application to UAVs as their movements are in the 3D space and lanes are not defined. Furthermore, not only the pass planning aspect is ambiguous in the 3D space, but also a low level detail that adds to the complexity of a microscopic model and therefore should be aggregated. This is so since the overall goal is understanding the longitudinal movements of vehicles along the highway. Finally, in OMV models, a velocity will be assigned to each vehicle based on the congestion in their lane. In the same vein, it is ambiguous how the velocity must be determined in the 3D space with no lanes.

The main problem is to formulate a traffic flow model in a 3D space with no lanes for UAVs. We solve this problem by using a concept of a channel in which vehicles move and a density/capacity framework where for a vehicle to move forward, the density (or congestion) in its horizon must be under the set capacity of the channel. That is the velocity of each vehicle is set based on the perceived congestion. We call this general view in modelling, a Density/Capacity View (DCV) as an alternative to OMV. A DCV-based model also aggregates the pass planning aspect by allowing a vehicle to pass when the congestion is sufficiently low.

In this work, the main novelty is to eliminate lanes and formulate a DCV-based microscopic traffic flow model for UAVs with application to  ground vehicles as well. Furthermore, our model can exhibit both blocking and passing regimes (analogous to one and multi-lane models) by setting a scalar capacity parameter $\kappa$ below or above a threshold, respectively. 
Our model is among a few models \cite{Has99,New61,Whi90} that can be solved analytically in the blocking regime and piece-wise analytically in the passing regime.
In contrast to the existing literature on multi-anticipation \cite{Tre13,Tre06,Len99,Eis03,Gip81}, our model sets the velocity for each vehicle in a novel way by calculating the overall density in  front of each vehicle and imposing a decaying exponential weight on the distances to every vehicle in the front. Finally, we prove various properties for our proposed model, including stability analysis for the blocking case and the characterization of the asymptotic behavior in the passing case.

\section{Related works}\label{sec relevant work}

Car following theories model the vehicles' movements on a single lane as they follow each other \cite{Jia01}. There are separate lane changing models such as MOBIL (short for Minimizing Overall Braking Induced by Lane change) \cite{Kes07} or the model in \cite{Tol03} that are used to extend these models to multilanes.

Most (if not all) the modern microscopic models are modelled as either single lane or multilane. These include most of the well-known traffic flow models (and their extensions) such as Optimal Velocity Model (OVM)\cite{Ban95}, Full Velocity Difference Model (FVDM)\cite{Jia01}, Intelligent Driver Model (IDM)\cite{Tre00}, and Newell's Car-Following Model\cite{New02}.

We argued in the introduction that pass planning should be aggregated. It is worth noting that in \cite{Lav06}, for macroscopic models (with lanes), authors define a rate of lane changing based on macroscopic quantities such as density. In \cite{Lav08}, based on the work of \cite{Lav06}, authors combine this with a microscopic model together with quantizing the prescribed rate to make it applicable to the microscopic model. However, still the model is essentially OMV-based, although to some extent the lane changing modeling complexity is avoided.

\subsubsection*{UAV traffic flow models}
The literature in this area is very sparse. We are aware of the following two studies.

To integrate UAVs in the airspace, researchers in NASA \cite{Jan17}, propose various structures for the airspace; including a road network like design (below the skyline; that is the tallest building height in a city) similar to our work in \cite{Gha16}. They set certain behavioral rules (i.e. a traffic flow model) for UAVs and accordingly extract the fundamental diagram of flow versus density. However, no stability analysis is done which is the standard in the traffic engineering community. Authors  perform only a numerical simulation under an acceleration from a standstill, followed by cruising and then braking of the leader on a flight lane. The traffic flow model is an OMV-based 1-lane model similar to that of ground vehicle models. In the model, authors consider the reaction delay. Their traffic flow model is based on a constant gain controller that adjusts the velocity to reach a goal velocity for some required separation. Also, the lane change is done collaboratively utilizing wireless communication between vehicles.

In \cite{Bat17}, with the goal of studying the wind effect on the fundamental diagram, the authors extend a car following model by Greenshields et al. \cite{Gre35} to include the wind force. This is a 1-lane model and no stability analysis is performed for the new model beyond what is already done for the original model by the research community. 

\subsubsection*{Ground vehicle traffic flow models}\label{subsec ground vehicle related work}
Traffic flow theory finds its root in the work of Greenshields in 1930s \cite{Gre35}. Traffic flow models can be classified across different dimensions, such as the aggregation level. Macroscopic models take a high level view of traffic flow similar to the flow of liquids or gases. Quantities of interest are local density, flow, mean speed and variance and their evolution through time \cite{Lig55,Ric56,Tre13, Kha12, Gni11,Kum14, Li04}. Microscopic models (e.g. see below) to which our models belong such as car-following or cellular automata models describe the interaction of each driver with its environment. In these models, we are interested in quantities such as individual position and speed and perhaps acceleration\cite{Tre13}.

Within microscopic models, we categorize the models based on their relevance to our model. In particular, a distinction is made between 1-lane or multi-lane models. Many of the classic models are 1-lane models. Among the classics are the Optimal Velocity Model (OVM)\cite{Ban95}, Full Velocity Difference Model (FVDM)\cite{Jia01}, and Intelligent Driver Model (IDM)\cite{Tre00} whereas \cite{Li19} is a more recent example. However, it is possible to extend these to multi-lane models by use of a lane change model such as MOBIL which dictates the rule of when it is safe and beneficial for a vehicle to change lanes\cite{Kes07}. 

Another distinction is whether a vehicle takes the optimal velocity in equilibrium instantly similar to our model or gradually. Models with delays are able to demonstrate delay-induced traffic phenomena at the expense of added complexity. No delay classic models include Reuschel and Pipe's models \cite{Reu50,Pip53}. Classic models such as OVM\cite{Ban95}, FVDM\cite{Jia01}, and Newell's Car-Following Model\cite{New02} exhibit delay.

Another distinction is whether the drivers only react to the immediate vehicle in the front or beyond. In particular, in multi-vehicle anticipation models, a few vehicles at the front are considered by the driver for better stability (fewer accidents) \cite{Tre13}. In \cite{Tre06}, the authors extend some of the traffic flow models including OVM, FVDM, and IDM by adding multi-vehicle anticipation features. In \cite{Len99} and \cite{Eis03}, the authors extend OVM and Gipps\cite{Gip81}.

Another distinction is whether the velocity is adjusted based on the time gaps between two vehicles or the space gaps (such as our model). Models such as FVDM \cite{Jia01} and \cite{Tre00} use time gaps whereas OVM \cite{Ban95} and Newell's car following model \cite{New02} use space gaps. 

We know of very few models that can be solved analytically. A 1-lane model by Hasebe et al. \cite{Has99} uses the tangent hyperbolic function to relate the distance between only subsequent vehicles to their velocity with exact solution for various delays. 

In a highly related work \cite{New61}, Newell designs a 1-lane model that can be solved analytically. It was later extended by Whitham\cite{Whi90}, finding  various exact wave solutions, such as periodic and solitary waves. The model assigns the velocity at time $t+\Delta$ to a follower vehicle according to an exponential decay congestion term at time $t$ where $\Delta$ is a delay constant. The congestion term is based on only the distance between the follower and the leader. This results in a non-linear differential equation which Newell transforms into a linear form when $\Delta=0$ and cars are identical. There are similarities and differences in how this model relates to our work. We used a similar technique to make our differential equations linear. Also, we use an exponential decay scheme, but our formulation is different in that we use all the vehicles in the front and not just the first one. Our model is DCV-based and can exhibit passing or blocking behavior according to the set value for capacity whereas this is a 1-lane model. Furthermore, except of having the same horizon for each car, we do not require cars to be identical. Certain details of the models are also different. For example, our model being DCV based, does not have a concept of minimum headway or vehicle length.

Stability analysis is an important part of the study of any traffic flow model. References \cite{Tre13} and \cite{Wil11}, establish various needed stability criteria for a traffic flow model.

 
 


\section{Model}
In our model, we consider a sequence of vehicles numbered as $0$ up to $N-1$ from the first to the last vehicle travelling along an infinite link. The position of each vehicle is designated by $x_i$ with respect to some chosen origin (Fig. \ref{fig:Fig1}). 

\begin{figure}[!t]
\centering
\includegraphics[trim=0 360 160 0, clip,width=3.5in]{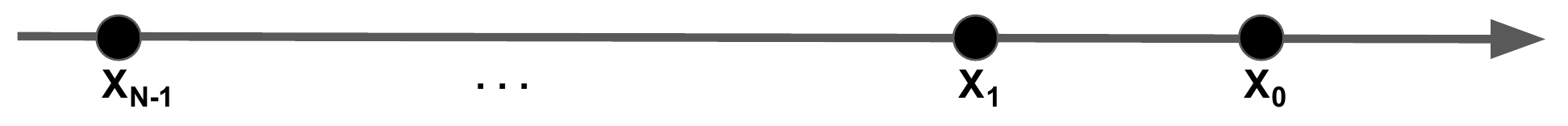}
\caption{Vehicle $i$'s position on the one directional link is shown with $x_i$. The first vehicles is indexed $0$.}
\label{fig:Fig1}
\end{figure}

The vehicles adjust their speeds based on the distances to the vehicles in front of them according to some exponential weighting scheme. This model is represented by the non-linear differential equations described by Eq. \ref{eq-model1}, \ref{eq-model2}, and \ref{eq-model3} as follows. 
\begin{equation}\label{eq-model1}
\frac{dx_i}{dt}=V_i\left(1-\Gamma_i\right)
\end{equation}
\begin{equation}\label{eq-model2}
\Gamma_i = \frac{1}{\kappa}\sum_{0\leq j<i}\exp\left(\frac{x_i-x_j}{\omega}\right)
\end{equation}
\begin{equation}\label{eq-model3}
\frac{dx_0}{dt}=V_0
\end{equation}
where the constant $V_i$ is the maximum free flow speed for vehicle $i$ and $\Gamma_i$ is the congestion factor. The constant $\omega$ is called the horizon in front of each vehicle. Once the leading cars are inside this horizon, they will have a substantial effect on slowing down vehicle $i$, otherwise their effects will be small. Parameter $\kappa$ is called capacity. Intuitively, $\kappa$ is roughly the maximum number of vehicles permitted inside the horizon $\omega$. One way to see this is that if all vehicles in front of a vehicle $i$ are located right in front of it, it takes $\kappa$ vehicles for $\Gamma_i$ to be $1$ in Eq. \ref{eq-model2} and as a result vehicle $i$ to slow down to $0$ velocity (i.e. a perfect jam). However, it is worth mentioning that $\kappa$ need not be an integer and can take any real positive value.

In accordance with this, in section \ref{sec-passingCase}, we prove that given $0<\kappa\leq1$, faster vehicles cannot overtake slower vehicles, corresponding to effectively a 1-lane link (since intuitively only $1$ vehicle is allowed in $i$'th vehicle's horizon as explained above). However, if $1<\kappa$, faster vehicles might be able to pass slower vehicles if certain conditions are satisfied; corresponding to a multi-lane link. We refer to these two different regimes as \textit{passing} and \textit{blocking} hereafter.

\subsection{Discussion and design philosophy}
In this section, we discuss the model in greater depth as well as some of the details of the model. 
Some of the main design decisions or features of the model are already presented in the introduction and throughout the paper and we do not revisit them here.

Originally, in our architecture, Internet of Drones (IoD) \cite{Gha16}, we proposed each airway to be a single lane to reduce technological burden on drones to safely execute a passing maneuver. However, it is plausible that as technology matures, allowing passing will increase the efficiency of airway usage. We are interested in both of these cases in this paper.

We argued earlier that the pass planning should be aggregated. In pass planning, we are dealing with specific maneuvers that happen for a vehicle to change its lateral position (in DCV models) or lane (in OMV models) which has a low relevance to the goal of studying the longitudinal movements. Furthermore, from a technical perspective, passing maneuvers for UAVs is less structured and require a more complex passing model.

One difference between the ground vehicles and autonomous UAVs is the delay aspect. We have assumed the delays for an autonomous vehicle to adjust its velocity according to the traffic condition is negligible. This is not  an entirely correct assumption as while it is plausible to assume the perception and reaction time will be very small compared to the human operated vehicles, still there will be a delay component dictated by the mechanical properties of the system and its inertia. 

Another design choice that we made was the use of space gaps between vehicles compared to the time gaps. Time gaps seem to be the reasonable choices in cases where there is a high disparity between the maximum velocities of different vehicles. But they also lack a crucial component for use for the airway. Since it is expected that the airway links will be very low altitude, they will be affected by the wind disturbances present in the urban centers. These can displace a UAV by several meters. Therefore, it seems the safest choice is to  space vehicles apart enough to safeguard for these disturbances. While time gaps are important as well, we cannot rely solely on them to ensure the safety of flights.

A difference between our model and multi-anticipation models as reviewed in section \ref{sec relevant work} is how the congestion is calculated. In our model in accordance to DCV, we take into account all the vehicles at the front whereas in multi-anticipation models, given the OMV frameworks, only the vehicles on the same lane are considered.

Our model makes it easy to introduce stationary or moving bottlenecks without modifying the model. For example, in the DCV framework, we can adjust the capacity locally by adding dummy vehicles (stationary or moving) whereas in the OMV case, we need to deal with explicit lane closures.


\section{Analytical solution}
We study the passing and blocking regimes separately below.
\subsection{blocking regime}
We use a differential equation technique to transform the characterizing differential equation (Eq. \ref{eq-model1}) into a linear  differential equation. A similar technique was used in \cite{New61}.
Defining the auxiliary variable 
\begin{equation}\label{eq def z_i}
z_i:=\exp\left(\frac{-x_i}{\omega}\right)
\end{equation}
we will have 
\begin{equation}\label{EQ-variableChange1}
\frac{dx_i}{dt}=\frac{-\omega}{z_i}\cdot\frac{dz_i}{dt}.
\end{equation}
Replacing $z_i$ in Eq. \ref{eq-model1} and Eq. \ref{eq-model2} will yield
\begin{equation}\label{EQ-variableChange2}
\frac{-\omega}{z_i}\cdot\frac{dz_i}{dt} = V_i\left(1-\frac{1}{\kappa}\sum_{0\leq j<i}\frac{z_j}{z_i}\right).
\end{equation}
After simplifications, we will have
\begin{equation}\label{EQ-variableChange3}
\frac{dz_i}{dt} = \frac{-V_i}{\omega}z_i+\frac{V_i}{\kappa\omega}\sum_{0\leq j<i}z_j.
\end{equation}
Eq. \ref{EQ-variableChange3} creates a set of homogeneous linear differential equations. There is no shortage of ways to solve this set of equations. One particular way which is especially applicable here is to solve a series of first order linear differential equations as follows. First let us define
\begin{equation}
    Z_{i}(t)=\frac{V_i}{\kappa\omega}\sum_{0\leq j<i}z_j.
\end{equation}
Starting from $z_1$, it can be solved by solving the following differential equation 
\begin{equation}
    \frac{dz_1}{dt} = \frac{-V_1}{\omega}z_1+Z_1(t).
\end{equation}
Since $Z_1$ is a known function in time, $z_1$ can be solved easily by the standard methods as it is a first order linear differential equation. As a result, now $Z_2$ is a known function in time, and similarly $z_2$ can be solved. Applying this method recursively, the whole set of equations can be solved by solving the resulting first order linear equation for each $z_i$.

By solving the set of equations using a method like above, in the simple case where all $V_i$'s are unique, the general solution to this set of differential equations can be written as
\begin{equation}
z_i(t) = \sum_{0\leq j\leq i}c_{i,j}\exp\left(\frac{-V_jt}{\omega}\right)
\end{equation}
where $c_{i,j}$ will be determined using the initial conditions.

In the case where velocities are not unique, the solution looks a bit more involved, but can be expressed in the following way. First let $U$ be the set of smallest indices of vehicles with unique maximum velocities. Let $m_{i,j}$ be the multiplicity of each velocity $V_j$ for vehicles $0$ to $i$ (that is those ahead of vehicle $i$). 
Then the solution for $z_i$ will be of form  
\begin{equation}\label{eq z_i solution}
z_i = \sum_{\substack{j\in U \\ j\leq i}}\sum_{0\leq d<m_{i,j}}c_{i,j,d}\cdot t^d\exp\left(\frac{-V_jt}{\omega}\right)
\end{equation}
and $c_{i,j,d}$ will be determined by the initial conditions.
 
\subsection{Passing regime}
The same analytical approach of the blocking regime applies to the passing regime. However, after each overtake, we need to solve the differential equations again for the vehicles involved in passing and all the vehicles behind them.
Therefore, we need to compute the passing times or in other words the roots to the equations of type 
\begin{equation}
x_{i+1}\left(t\right)-x_{i}\left(t\right)=0
\end{equation}
or equivalently
\begin{equation}
z_{i+1}\left(t\right)-z_{i}\left(t\right)=0.
\end{equation}
The problem is to find the equation that has the smallest passing time and the passing time itself. This is necessary, so the coefficients in the solution can be corrected as soon as a passing occurs.

We have not developed any heuristics for the root finding algorithm, but it seems plausible that an algorithm can generate a short list of candidate equations that are suspected to have the smallest root based on various heuristics such as the distance between two vehicles and the velocity differences among other things. It is then easy to verify whether the obtained passing time is indeed minimal by checking that only one pass has occurred.

\subsection{Stability analysis}
As mentioned, a differential equation technique was used to turn 
Eq. \ref{EQ-variableChange1} and \ref{EQ-variableChange2} into the linear form of Eq. \ref{EQ-variableChange3}. However, since the variables $z_i$ in terms of which Eq. \ref{EQ-variableChange3} is linear are at their cores exponential functions, they can never be $0$. This is relevant since the point where all state variables are $0$ is the unique equilibrium point for linear systems of form $\frac{dq}{dt}=Aq$ where $det(A)\ne 0$. Putting Eq. \ref{EQ-variableChange3} in this matrix format will yield a lower triangular matrix $A$ whose diagonal elements are $\frac{-V_i}{\omega}$ and therefore non-zero. Since in a lower triangular matrix, the eigenvalues are the diagonal elements and no $0$ eigenvalue exists in this case, the determinant is non-zero. Therefore, we cannot use our analytical result for the purpose of stability analysis. In the next section, we rely on linearization and numerical simulation to study the stability of the model in the blocking regime.

\section{Model properties}
\subsection{Soundness}
In this section, we prove a few theorems that establish some of the expectations we have from a sound model.

We expect the velocities that are prescribed for each vehicle to be in the direction of the flow; that is non-negative. Here we show that our model never prescribes a negative velocity.
\begin{theorem}[Non-negative velocity]\label{Thm-Non-Negative-Velocity}
Given vehicle $i$ with maximum velocity $V_i$ in a platoon, $\frac{dx_i}{dt}\geq 0$.
\end{theorem}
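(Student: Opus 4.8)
The plan is to reduce the whole statement to an invariant on the congestion factor. Since $V_i>0$ (a maximum free-flow speed is positive), the sign of $\frac{dx_i}{dt}=V_i(1-\Gamma_i)$ coincides with the sign of $1-\Gamma_i$, so $\frac{dx_i}{dt}\ge 0$ is equivalent to $\Gamma_i\le 1$. The theorem therefore becomes the claim that the condition $\Gamma_i\le 1$ is preserved along trajectories, granted a sound initial platoon, i.e. one in which $\Gamma_i(0)\le 1$ for every $i$ (no vehicle starts overcongested). I would state this assumption explicitly, since without it the claim fails.

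I would then argue by induction on the vehicle index $i$, exploiting the fact that by Eq.~\ref{eq-model2} the factor $\Gamma_i$ depends only on the positions $x_0,\dots,x_{i-1}$ of the preceding vehicles. The base case $i=0$ is immediate from Eq.~\ref{eq-model3}: $\frac{dx_0}{dt}=V_0>0$, and indeed $\Gamma_0=0$ by the empty-sum convention. For the inductive step I would assume $\frac{dx_j}{dt}\ge 0$ for all $j<i$ and all $t\ge 0$, and aim to show $\Gamma_i(t)\le 1$ for all $t\ge 0$.

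The core is a barrier (invariance) argument on the boundary $\Gamma_i=1$. Differentiating Eq.~\ref{eq-model2} in time gives
\begin{equation}
\frac{d\Gamma_i}{dt} = \frac{1}{\kappa\omega}\sum_{0\le j<i}\exp\left(\frac{x_i-x_j}{\omega}\right)\left(\frac{dx_i}{dt}-\frac{dx_j}{dt}\right).
\end{equation}
At any instant where $\Gamma_i=1$ we have $\frac{dx_i}{dt}=V_i(1-\Gamma_i)=0$, so the $\frac{dx_i}{dt}$ contribution drops out and
\begin{equation}
\left.\frac{d\Gamma_i}{dt}\right|_{\Gamma_i=1} = -\frac{1}{\kappa\omega}\sum_{0\le j<i}\exp\left(\frac{x_i-x_j}{\omega}\right)\frac{dx_j}{dt} \le 0,
\end{equation}
where the inequality is precisely the inductive hypothesis $\frac{dx_j}{dt}\ge 0$ together with the positivity of the exponential weights and of $\kappa$ and $\omega$. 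Hence whenever $\Gamma_i$ reaches the level $1$ its time derivative is non-positive, so the vector field points back into the region $\Gamma_i\le 1$; starting from $\Gamma_i(0)\le 1$, the value can never cross above $1$. This closes the induction and gives $\frac{dx_i}{dt}\ge 0$ for every vehicle.

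The main obstacle I anticipate is upgrading the boundary computation into a fully rigorous invariance statement, because the derivative there is only \emph{non-strictly} non-positive: in the degenerate case $\Gamma_i=1$ with $\frac{d\Gamma_i}{dt}=0$ one must rule out an escape driven by higher-order terms. I would dispatch this with a standard Nagumo-type / comparison argument — considering the first time $\Gamma_i$ would exceed $1$ and deriving a contradiction from the inward-pointing field, or perturbing the barrier to level $1+\epsilon$ and letting $\epsilon\to 0$. Everything else (the differentiation, the positivity bookkeeping, and the induction) is routine.
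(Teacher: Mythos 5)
Your proof is correct and follows essentially the same route as the paper's: both differentiate the model equation and examine the sign of the resulting quantity (your $\frac{d\Gamma_i}{dt}$ is the paper's $\frac{d^2x_i}{dt^2}$ up to the factor $-V_i$) at the first instant a minimal offending vehicle's velocity reaches zero, i.e. $\Gamma_i=1$, using the non-negativity of the preceding vehicles' velocities via induction/minimality. The one thing you overlooked is that the $j=0$ term always contributes $-\frac{V_0}{\kappa\omega}\exp\left(\frac{x_i-x_0}{\omega}\right)<0$ strictly (since $\frac{dx_0}{dt}=V_0>0$ at all times), which makes the boundary derivative strictly negative and dispenses with your Nagumo-type concern about a degenerate escape --- this strict inequality is exactly how the paper closes the argument.
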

\begin{proof}
For the sake of contradiction, and without loss of generality, let $n$ be the vehicle closest to the front in a platoon whose velocity will become negative. Call the moment when the velocity becomes zero, $t=t_0$. Taking a time derivative from both sides in Eq. \ref{eq-model1}, we will have
	\begin{equation}\label{Eq d^2x_i/dt^2}
		\frac{d^2x_i}{dt^2} = \frac{V_i}{\kappa\omega}\sum_{0\leq j<i}\left(\frac{dx_j}{dt}-\frac{dx_i}{dt}\right)\exp\left(\frac{x_i-x_j}{\omega}\right).
	\end{equation}
	Evaluating Eq. \ref{Eq d^2x_i/dt^2} at $t=t_0$, we will get
		\begin{equation}
		\frac{d^2x_i}{dt^2} = \frac{V_i}{\kappa\omega}\sum_{0\leq j<i}\left(\frac{dx_j}{dt}\right)\exp\left(\frac{x_i-x_j}{\omega}\right)>0
	\end{equation}
	where the strict inequality holds since no vehicle $j$ with $j<i$ can have a negative velocity due to our assumption and at least the first vehicle has a positive velocity. Since the derivative of velocity is positive, the velocity cannot become negative.
\end{proof}

Next, we prove that a vehicle with a smaller maximum velocity cannot pass a vehicle with a larger maximum velocity. One might perceive this is possible if the faster vehicle is subject to more congestion, but we show this will never be the case.

Before presenting the next theorem, let us first define a platoon. A platoon is referred to a group of vehicles that travel together while keeping their distances under some upper bound (i.e. the distance of the first to the last vehicle is always bounded by some constant). 

\begin{theorem}[No overtaking by slow]\label{Thm-No Overtake by slow}
Given vehicles $i$ and $i+1$ in a platoon, if $V_i\geq V_{i+1}$ and $x_i\left(t_0\right)>x_{i+1}\left(t_0\right)$, then $x_i\left(t\right)>x_{i+1}\left(t\right)$ for all $t\geq t_0$.
\end{theorem}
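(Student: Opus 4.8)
The plan is a standard first-contact argument combined with one decisive algebraic observation about the congestion terms. Define $g(t) := x_i(t) - x_{i+1}(t)$, so that $g(t_0) > 0$ and the goal is to show $g(t) > 0$ for all $t \geq t_0$. Since the right-hand sides of Eqs.~\ref{eq-model1}--\ref{eq-model3} are smooth in the positions, each $x_i$ is $C^1$, hence $g$ is continuous and differentiable. I would argue by contradiction: if $g$ fails to stay positive, then by continuity it has a first zero $t_1 > t_0$, with $g(t) > 0$ on $[t_0, t_1)$ and $g(t_1) = 0$. Approaching $t_1$ from the left forces the (one-sided, hence full) derivative to satisfy $g'(t_1) \leq 0$. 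The contradiction will come from showing $g'(t_1) > 0$.

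The key step is to evaluate the two congestion factors at the contact time $t_1$, where $x_i(t_1) = x_{i+1}(t_1)$. The sum defining $\Gamma_{i+1}$ runs over $0 \leq j \leq i$ while the one for $\Gamma_i$ runs over $0 \leq j < i$; the only difference is the extra term $\exp\!\left((x_{i+1}-x_i)/\omega\right)$. Because every shared term for $j < i$ uses the now-equal values $x_{i+1} = x_i$, those terms coincide exactly, and the extra term equals $\exp(0) = 1$. Hence at $t_1$ one obtains the clean identity $\Gamma_{i+1} = \Gamma_i + 1/\kappa$, irrespective of where the remaining vehicles $0,\dots,i-1$ sit. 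This is what makes the argument self-contained for the pair $(i,i+1)$ and avoids any global induction on the vehicle ordering.

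Substituting into $g'(t_1) = V_i(1 - \Gamma_i) - V_{i+1}(1 - \Gamma_{i+1})$ and using the identity gives
\begin{equation}
g'(t_1) = (V_i - V_{i+1})(1 - \Gamma_i) + \frac{V_{i+1}}{\kappa}.
\end{equation}
Here $V_i - V_{i+1} \geq 0$ by hypothesis, and $1 - \Gamma_i \geq 0$ follows from Theorem~\ref{Thm-Non-Negative-Velocity} (since $V_i(1-\Gamma_i) = dx_i/dt \geq 0$ with $V_i > 0$), so the first term is non-negative; the term $V_{i+1}/\kappa$ is strictly positive because $\kappa > 0$. Thus $g'(t_1) \geq V_{i+1}/\kappa > 0$, contradicting $g'(t_1) \leq 0$. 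The only case excluded is $V_{i+1} = 0$, which forces $V_i = 0$ as well and is trivial since then neither vehicle moves and the strict gap is preserved.

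I do not expect a serious obstacle. The one point needing care is justifying the first-contact reduction — that a first zero exists and forces $g'(t_1) \leq 0$ — which rests on continuity and differentiability of the solution, guaranteed on the platoon's bounded horizon. The genuine content is the observation that coincidence of the two vehicles contributes exactly the repulsive increment $V_{i+1}/\kappa$ to the gap dynamics, and that this cannot be overcome by the faster lead vehicle; isolating this term is precisely what makes the inequality strict.
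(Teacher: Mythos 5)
Your proposal is correct and follows essentially the same route as the paper: both arguments evaluate the dynamics at the hypothetical contact time, use the identity $\Gamma_{i+1}=\Gamma_i+\tfrac{1}{\kappa}$ when $x_{i+1}=x_i$, and conclude that the gap derivative equals $(V_i-V_{i+1})(1-\Gamma_i)+\tfrac{V_{i+1}}{\kappa}>0$. Your write-up is somewhat more careful than the paper's (explicit first-contact reduction and the degenerate case $V_{i+1}=0$), but the mathematical content is identical.
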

\begin{proof}Assume there exists some $t=t_p$, where $x=x_i\left(t_p\right)=x_{i+1}\left(t_p\right)$. At time $t=t_p$, we have $x_i=x_{i+1}$. By using Eq. \ref{eq-model1} for vehicle $i$, we can rewrite Eq. \ref{eq-model1} for vehicle $i+1$ as

\begin{equation*}
\frac{dx_{i+1}}{dt}=V_{i+1}\left(1-\left(\frac{1}{\kappa}+\Gamma_i\left(t\right)\right)\right)=
\end{equation*}

	 \begin{equation}\label{EQ dx_i+1/dt again}
	  	-\frac{V_{i+1}}{\kappa}+\frac{V_{i+1}}{V_{i}}\frac{dx_i}{dt}.
 	\end{equation}
	
	From Eq. \ref{EQ dx_i+1/dt again} above, it is clear that at $t=t_p$, $\frac{dx_i}{dt}> \frac{dx_{i+1}}{dt}$. This proves that passing will never be completed.

\end{proof}

\subsection{Passing or blocking behavior}\label{sec-passingCase}
First we prove a necessary and sufficient condition for a vehicle to pass another. Then we prove one of our main results that there exists a threshold for $\kappa$ above which, the model permits passing and below which it is not permitted. This constitutes a regime change in our model.

\begin{theorem}[Passing condition]\label{Thm-PassingCriteriaFor2}
Given vehicles $i$ and $i+1$ in a platoon with $V_{i+1}>V_i$ and $x_{i+1}\left(t_p\right)=x_i\left(t_p\right)$, vehicle $i+1$ will pass vehicle $i$ if and only if at the time of passing $t_p$ the following condition is met
\begin{equation}\label{Eq-Passing condition}
\kappa\left(1-\Gamma_i\left(t\right)\right)>\frac{V_{i+1}}{V_{i+1}-V_i}.
\end{equation}
\end{theorem}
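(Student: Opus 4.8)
The plan is to reduce the global overtaking question to a purely local comparison of the two velocities at the contact instant $t_p$. Introduce the gap $g(t)=x_{i+1}(t)-x_i(t)$; the hypothesis gives $g(t_p)=0$, and since $i+1$ is the faster vehicle (larger index, hence originally behind) approaching $i$ from the rear, $g(t)<0$ for $t$ just below $t_p$. Vehicle $i+1$ completes the pass exactly when $g$ turns positive, and because $g(t_p)=0$ this is controlled by the sign of $g'(t_p)$: the overtake is completed iff $g'(t_p)>0$, equivalently iff $\frac{dx_{i+1}}{dt}>\frac{dx_i}{dt}$ at $t_p$. The borderline case $g'(t_p)=0$ is non-generic and, with $g<0$ strictly below $t_p$, forces a local maximum or inflection of $g$ at $t_p$, so no pass results; this is precisely why the stated condition is a strict inequality.

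The one computation needed is the value of $\frac{dx_{i+1}}{dt}$ at $t_p$. Splitting the congestion sum of Eq. \ref{eq-model2} for index $i+1$ into the single term $j=i$ and the remaining terms $0\le j<i$, and inserting $x_{i+1}(t_p)=x_i(t_p)$, the $j=i$ exponential becomes $1$ while every other exponential coincides with the matching term of $\Gamma_i$; hence $\Gamma_{i+1}=\frac{1}{\kappa}+\Gamma_i$ at $t_p$. Substituting back into Eq. \ref{eq-model1} yields $\frac{dx_{i+1}}{dt}=V_{i+1}(1-\frac{1}{\kappa}-\Gamma_i)$ at $t_p$, which is exactly the relation already recorded as Eq. \ref{EQ dx_i+1/dt again}.

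It then remains to manipulate the velocity inequality $V_{i+1}(1-\frac{1}{\kappa}-\Gamma_i)>V_i(1-\Gamma_i)$: collect the $(1-\Gamma_i)$ terms on one side against the $V_{i+1}/\kappa$ term, factor out $V_{i+1}-V_i$, and divide by that quantity, which is positive by the hypothesis $V_{i+1}>V_i$. This produces $\kappa(1-\Gamma_i)>\frac{V_{i+1}}{V_{i+1}-V_i}$, namely Eq. \ref{Eq-Passing condition}. Since every step is a reversible equivalence, the chain establishes the claimed biconditional.

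I expect the only genuinely delicate point to be the justification that the sign of $g'(t_p)$ alone decides completion of the pass, i.e. the exclusion of the degenerate tangency $g'(t_p)=0$; once the velocity is evaluated, everything else is routine reversible algebra. That comparing $\frac{dx_{i+1}}{dt}$ with $\frac{dx_i}{dt}$ at the contact instant is the correct mechanism is corroborated by Theorem \ref{Thm-No Overtake by slow}, whose proof settles the mirror situation of a slower follower by exactly the same velocity comparison.
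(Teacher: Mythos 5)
Your proposal is correct and follows essentially the same route as the paper: it derives the relation $\Gamma_{i+1}=\frac{1}{\kappa}+\Gamma_i$ at the contact instant (the paper's Eq.~\ref{EQ dx_i+1/dt again}), reduces passing to the sign of $\frac{dx_{i+1}}{dt}-\frac{dx_i}{dt}$ at $t_p$, and performs the same reversible algebra to reach Eq.~\ref{Eq-Passing condition}. Your explicit discussion of the degenerate tangency $g'(t_p)=0$ is a small additional refinement the paper leaves implicit, but it does not change the argument.
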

\begin{proof}Eq. \ref{eq-model2} and 
theorem \ref{Thm-Non-Negative-Velocity} imply that $0\leq \Gamma_i\left(t\right)\leq 1$. We use Eq. \ref{eq-model1} for vehicle $i$ and Eq. \ref{EQ dx_i+1/dt again} for vehicle $i+1$ (which also holds here) in the following.
Vehicle $i+1$ will pass vehicle $i$ if and only if we  have (at time of passing)
\begin{equation*}
\frac{dx_{i+1}}{dt}-\frac{dx_i}{dt}>0 \Leftrightarrow
\end{equation*}
\begin{equation*}
\left(V_{i+1}-V_i\right)\left(1-\Gamma_i\left(t\right)\right)-\frac{V_{i+1}}{\kappa}>0 \Leftrightarrow
\end{equation*}
\begin{equation}
\kappa\left(1-\Gamma_i\left(t\right)\right)>\frac{V_{i+1}}{V_{i+1}-V_i}.
\end{equation}
\end{proof}

The next corollary states one of our main results.
\begin{corollary}[]\label{Thm-DW<=1}
$\kappa\leq 1$ is a sufficient condition for no passing to occur.
\end{corollary}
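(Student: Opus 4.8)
The plan is to obtain this as the contrapositive of the passing criterion in Theorem \ref{Thm-PassingCriteriaFor2}. Since the vehicles travel single file, the relative order of two cars can only reverse at a moment where they momentarily coincide, and the follower of index $i+1$ can only reach the car of index $i$ directly ahead of it; hence every potential overtake is an adjacent one of exactly the kind covered by Theorem \ref{Thm-PassingCriteriaFor2}. The case $V_{i+1}\leq V_i$ is already excluded by Theorem \ref{Thm-No Overtake by slow}, so it suffices to treat $V_{i+1}>V_i$ and to show that under $\kappa\leq 1$ the strict inequality \eqref{Eq-Passing condition} can never hold at any candidate passing time.

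First I would bound the left-hand side of \eqref{Eq-Passing condition} from above. By Theorem \ref{Thm-Non-Negative-Velocity} together with Eq. \ref{eq-model2}, the congestion factor obeys $0\leq\Gamma_i(t)\leq 1$, so $1-\Gamma_i(t)\leq 1$. Combining this with the hypothesis $\kappa\leq 1$ gives
\[
\kappa\left(1-\Gamma_i(t)\right)\;\leq\;\kappa\;\leq\;1 .
\]
Next I would bound the right-hand side from below. Because $V_{i+1}>V_i$ while all maximum velocities are non-negative, we have
\[
\frac{V_{i+1}}{V_{i+1}-V_i}=\frac{1}{1-V_i/V_{i+1}}\;\geq\;1 ,
\]
with equality only when $V_i=0$. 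Chaining the two estimates yields $\kappa\left(1-\Gamma_i(t)\right)\leq 1\leq V_{i+1}/(V_{i+1}-V_i)$, which directly contradicts the strict passing condition \eqref{Eq-Passing condition}. By Theorem \ref{Thm-PassingCriteriaFor2} the overtake is therefore never completed, establishing the corollary.

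I do not anticipate a substantive obstacle, as the argument reduces to a one-sided sign comparison once the two bounds are in place. The only points needing care are the reduction to adjacent vehicles—justifying that any genuine passing event must pass through an instant where the overtaking pair coincides as immediate neighbors—and the observation that the \emph{non-strict} chain $\kappa(1-\Gamma_i)\leq 1\leq V_{i+1}/(V_{i+1}-V_i)$ is enough, precisely because Theorem \ref{Thm-PassingCriteriaFor2} requires a \emph{strict} inequality for passing, so the boundary cases (for instance $V_i=0$ or $\Gamma_i=0$) cannot slip an overtake through.
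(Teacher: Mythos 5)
Your argument is correct and is essentially the paper's own proof: both bound $\kappa\left(1-\Gamma_i(t)\right)\leq 1$ using $0\leq\Gamma_i\leq 1$ (from Theorem \ref{Thm-Non-Negative-Velocity} and Eq. \ref{eq-model2}) and compare against $V_{i+1}/(V_{i+1}-V_i)\geq 1$ to rule out the strict passing condition of Theorem \ref{Thm-PassingCriteriaFor2}. Your added remarks on reducing to adjacent pairs and invoking Theorem \ref{Thm-No Overtake by slow} for the $V_{i+1}\leq V_i$ case are sensible housekeeping the paper leaves implicit, but they do not change the approach.
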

\begin{proof} Eq. \ref{eq-model2} and 
theorem \ref{Thm-Non-Negative-Velocity} imply that $0\leq \Gamma_i\left(t\right)\leq 1$. Since $0< \kappa\leq 1$, we have 
\begin{equation}
0\leq \kappa\left(1-\Gamma_i\left(t\right)\right)\leq 1.
\end{equation}
According to theorem \ref{Thm-PassingCriteriaFor2}, a faster vehicle $i+1$ will pass vehicle $i$ if and only if Eq. \ref{Eq-Passing condition} holds. But this will not hold as $1\leq\frac{V_{i+1}}{V_{i+1}-V_i}$. Therefore no passing occurs.
\end{proof}

\subsection{Asymptotic behavior in passing regime}

We cannot perform a straightforward stability analysis  in the passing regime since it is not clear how to conceptualize a reasonable equilibrium point in this case. 
However, the following theorems will be useful in understanding the passing regime in the asymptotic case.

\begin{theorem}[Order stability]\label{Thm-Order stability after T}
There exists a time $T$ after which the order of vehicles in the system will not change.
\end{theorem}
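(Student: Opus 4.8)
The plan is to treat each overtaking event as an \emph{irreversible} re-sorting step and to bound the total number of such steps by a combinatorial constant, so that after the last one the order can no longer change. The guiding intuition is that a pass can only move a faster vehicle ahead of a slower one, and Theorem \ref{Thm-No Overtake by slow} guarantees that such a move is never undone. Hence the physical order can only evolve monotonically toward the order induced by sorting the maximum speeds, a process that must terminate after finitely many steps.

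Concretely, I would first fix an arbitrary pair of vehicles $a$ and $b$ and show that their relative longitudinal order can change at most once. Their order changes only at an instant $t_p$ with $x_a(t_p)=x_b(t_p)$, and at such an instant the two vehicles are physically adjacent, so the two-vehicle analysis of Theorem \ref{Thm-No Overtake by slow} applies to them. If $V_a=V_b$, that theorem shows whichever vehicle is in front stays in front, so the pair never swaps. If, say, $V_a>V_b$, then once $a$ is ahead of $b$ it remains ahead for all later times, so the only admissible transition is $b$-ahead $\rightarrow$ $a$-ahead, which occurs at most once. Summing over all $\binom{N}{2}$ pairs, the total number of order changes in the system is at most $\binom{N}{2}$.

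Having bounded the number of order changes, I would conclude as follows. A finite number of order-change events occurs at a finite set of times; let $T$ be the largest such time (and $T=t_0$ if no passing ever occurs). For every $t>T$ no pair swaps, so the global order is constant, which is the claim. Equivalently, one may phrase the bound through a monotone potential, namely the number of speed-inversions in the current order, which is a non-negative integer that strictly decreases at every pass; this makes termination transparent and simultaneously rules out any Zeno-type accumulation of infinitely many passes in finite time.

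The main obstacle is the ``at most once per pair'' step, where two points need care. First, a change in relative order can only be realized at a coincidence $x_a=x_b$ at which the two vehicles are neighbors, so that the two-vehicle reasoning behind Theorem \ref{Thm-No Overtake by slow} is legitimately applicable to them despite intervening passes among other vehicles and the attendant relabeling. Second, the irreversibility must be genuinely global in time rather than valid only until the next coincidence. Both follow from the fact that the strict inequality between the front and behind velocities at every coincidence of a faster-in-front pair prevents the separation $x_a-x_b$ from ever changing sign; but this should be stated so the local computation is seen to recur at each future coincidence, not merely at the first.
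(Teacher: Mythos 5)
Your proof is correct and takes essentially the same route as the paper: both rest on the irreversibility of passes guaranteed by Theorem \ref{Thm-No Overtake by slow} plus a finiteness argument (the paper phrases it as a monotone progression through the finite set of ordering configurations, you phrase it as an at-most-once-per-pair bound of $\binom{N}{2}$ swaps). Your explicit inversion-count potential and the remark ruling out Zeno accumulation are a slightly sharper packaging of the same idea, and your closing caveats correctly identify the only delicate points (applicability of the pairwise coincidence computation after relabeling, and persistence of the strict inequality at every future coincidence).
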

\begin{proof}
The number of possible orderings is fixed. Also, a slower vehicle cannot pass a faster vehicle according to theorem \ref{Thm-No Overtake by slow}. This creates a partial order on the set of ordering configurations. Therefore, at any state, either the system remains in that state forever or will move to a new state according to the partial order with no going back. Since the number of new admissible states is finite, the system will have to stay in one of the states forever after some time $T$.
\end{proof}

One might suspect that given enough time, vehicles will be sorted based on their maximum velocities; that is the fastest vehicle will become the first vehicle, the second fastest vehicle will be second, and so on. But as we will see in Theorem \ref{Thm-TimeInfinityOrder}, this will not necessarily be the case unless there is a meaningful difference between the velocities of any two vehicles. Intuitively, this can be understood in the following way; if a highway is congested to some extent and there are two vehicles that have slightly different maximum velocities, it is difficult for the fast vehicle to gain enough speed difference to take advantage of the little space available and overtake the slow vehicle.

\begin{theorem}[All fasts pass condition]\label{Thm-TimeInfinityOrder}
For a system with $N$ vehicles, as time goes to infinity, vehicles are guaranteed to be sorted via passing according to their maximum velocities, if and only if the following holds
\begin{equation}\label{Eq-GuaranteedSort}
\kappa>\max_{\substack{0\leq i,j\leq N-1 \\ i\neq j}}\left(\frac{V_j}{V_j-V_i}\right).
\end{equation} 
\end{theorem}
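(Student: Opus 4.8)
The plan is to prove the equivalence in two directions, using Theorem~\ref{Thm-Order stability after T} to reduce the asymptotic question to a combinatorial one about the final ordering. By order stability there is a time $T$ after which the ordering of the vehicles along the link is frozen; I would relabel the vehicles $v_0,\dots,v_{N-1}$ from front to back in this final order, so that the system is fully sorted exactly when this order has no adjacent inversion, i.e.\ when $V_{v_0}\ge V_{v_1}\ge\cdots\ge V_{v_{N-1}}$. I would also record the elementary observation that, since $\frac{V_j}{V_j-V_i}$ is negative when $V_j<V_i$ and exceeds $1$ when $V_j>V_i$, the maximum in inequality~\ref{Eq-GuaranteedSort} is attained at some pair with $V_{j}>V_{i}$, and that for such a pair $\frac{V_{j}}{V_{j}-V_{i}}$ is precisely the capacity threshold of the passing condition in Theorem~\ref{Thm-PassingCriteriaFor2} specialized to an uncongested front vehicle ($\Gamma_i=0$).

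For \emph{sufficiency} (large $\kappa$ forces sorting), suppose toward a contradiction that the frozen order has an inversion, and let $k$ be the smallest index with $V_{v_k}<V_{v_{k+1}}$. Minimality forces $V_{v_0}\ge\cdots\ge V_{v_k}$, so $v_k$ is the slowest of the vehicles weakly ahead of it. I would first show $\Gamma_{v_k}(t)\to 0$ by a front-to-back induction on $j=0,\dots,k$: the base case $v_0$ is the global leader with $\Gamma_{v_0}=0$; in the inductive step every vehicle ahead of $v_j$ within this non-increasing block moves asymptotically at speed at least $V_{v_j}$, so $v_j$ falls steadily behind each of them, every gap $x_{v_i}-x_{v_j}$ with $i<j$ diverges, and hence $\Gamma_{v_j}\to 0$ by Eq.~\ref{eq-model2}. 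Granting $\Gamma_{v_k}\to 0$, I would track $g(t)=x_{v_k}(t)-x_{v_{k+1}}(t)\ge 0$. Using Eq.~\ref{EQ dx_i+1/dt again} (equivalently, differentiating the gap directly), its asymptotic drift satisfies $\dot g\le V_{v_k}-V_{v_{k+1}}+\frac{V_{v_{k+1}}}{\kappa}+o(1)$, and since $\kappa>\frac{V_{v_{k+1}}}{V_{v_{k+1}}-V_{v_k}}$ the leading constant is strictly negative. Thus $\dot g$ is eventually bounded above by a negative constant, forcing $g(t)\to-\infty$ and contradicting $g\ge 0$; equivalently, $v_{k+1}$ reaches contact and, by Theorem~\ref{Thm-PassingCriteriaFor2} with $\Gamma_{v_k}\to 0<1-\frac{1}{\kappa}\cdot\frac{V_{v_{k+1}}}{V_{v_{k+1}}-V_{v_k}}$, overtakes $v_k$, which is impossible after time $T$. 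Hence no inversion exists and the system is sorted.

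For \emph{necessity} (if $\kappa\le$ the maximum, sorting is not guaranteed), let the pair $(i^\star,j^\star)$ attain the maximum, so that $V_{j^\star}>V_{i^\star}$ and $\kappa\le\frac{V_{j^\star}}{V_{j^\star}-V_{i^\star}}$. I would exhibit one initial condition that never sorts: place the vehicle of free speed $V_{i^\star}$ at the front and the vehicle of free speed $V_{j^\star}$ immediately behind it at separation $g^\star=\omega\ln\frac{V_{j^\star}}{\kappa(V_{j^\star}-V_{i^\star})}\ge 0$, the nonnegativity holding exactly because $\kappa\le\frac{V_{j^\star}}{V_{j^\star}-V_{i^\star}}$, and put the remaining $N-2$ vehicles far to the rear. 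Since the congestion factor in Eq.~\ref{eq-model2} of any vehicle sums only over vehicles ahead of it, the trailing vehicles do not affect the pair; the front vehicle runs uncongested at $V_{i^\star}$, while at separation $g^\star$ one computes $\Gamma_{j^\star}=\frac{1}{\kappa}e^{-g^\star/\omega}=1-\frac{V_{i^\star}}{V_{j^\star}}$, so the faster trailing vehicle also moves at exactly $V_{i^\star}$. This is a fixed-gap equilibrium, so the faster vehicle stays behind the slower one forever and the system is never sorted, establishing necessity. The degenerate case of a single repeated free speed, where the maximum is undefined, is excluded since then every order is trivially sorted.

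The main obstacle is the asymptotic claim $\Gamma_{v_k}(t)\to 0$ together with the underlying speed characterization, rather than the algebra of the passing threshold. Making the front-to-back induction rigorous requires handling the equal-free-speed case: when $V_{v_i}=V_{v_j}$ the gap does not stay bounded but grows only logarithmically (solving $\dot g=\frac{V}{\kappa}e^{-g/\omega}$ gives $g(t)\sim\omega\ln t$), so I must verify that every such gap still diverges and that the finitely many exponential terms in Eq.~\ref{eq-model2} sum to $o(1)$. The second delicate point is justifying the uniform asymptotic bound $\dot g\le V_{v_k}-V_{v_{k+1}}+\frac{V_{v_{k+1}}}{\kappa}+o(1)$ for all $g\ge 0$, i.e.\ interchanging the $t\to\infty$ limit with the gap evolution; once this uniform negative drift is in hand the contradiction with $g\ge 0$ is immediate. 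The reduction via Theorem~\ref{Thm-Order stability after T}, the sign analysis of the maximum, and the explicit equilibrium construction for necessity are then routine.
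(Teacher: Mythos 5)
Your proposal is correct, and the sufficiency direction is essentially the paper's own argument: freeze the order via Theorem \ref{Thm-Order stability after T}, take the first inversion, show the congestion of the slower front vehicle vanishes, and extract a persistent speed surplus for the faster follower from $\kappa>\frac{V_{j}}{V_{j}-V_{i}}$, contradicting order stability. You are in fact more careful than the paper on the one delicate step, the claim $\Gamma\to 0$: the vehicles ahead of the inversion are only \emph{weakly} faster, and the equal-free-speed gaps diverge only logarithmically (via $\dot g=\frac{V}{\kappa}e^{-g/\omega}$); the paper simply asserts that the vehicles in front are ``faster'' and moves on, with the needed divergence argument appearing only in its appendix Lemma \ref{Thm-EqualSpeedSplitPlatoon} and not invoked here. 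Where you genuinely differ is the necessity direction. The paper exhibits a \emph{separate} two-vehicle system whose velocities are chosen so that the given $\kappa$ equals the threshold, and invokes its appendix Lemma \ref{Thm-PassingThreshold}; you instead keep the given system, take the maximizing pair $(i^\star,j^\star)$, and write down an explicit non-sorted fixed-gap configuration with $g^\star=\omega\ln\frac{V_{j^\star}}{\kappa\left(V_{j^\star}-V_{i^\star}\right)}$ that persists forever. Your version reads ``guaranteed'' as quantifying over initial conditions of the stated system, which is the more natural reading of the theorem and yields a strictly stronger conclusion than the paper's (which only shows the bound is tight over the class of all two-vehicle systems). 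Two minor cautions: in the boundary case where $\kappa$ equals the maximum, $g^\star=0$ places the pair at coincident positions, so you should instead start from a positive gap and note that contact is approached but the strict passing condition of Theorem \ref{Thm-PassingCriteriaFor2} never holds; and you should add a sentence that the $N-2$ trailing vehicles can never cause $j^\star$ to overtake $i^\star$, since at any hypothetical contact $\kappa\left(1-\Gamma_{i^\star}\right)\leq\kappa\leq\frac{V_{j^\star}}{V_{j^\star}-V_{i^\star}}$ regardless of what the rest of the fleet does.
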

\begin{proof}
We first prove given the condition in Eq. \ref{Eq-GuaranteedSort}, a sorted order will be achieved.
From theorem \ref{Thm-Order stability after T}, the final order will be stable. We take this moment as the origin for time. For the sake of contradiction, assume the stable order is not sorted according to the maximum velocities. Let $i+1$ be the first vehicle with a larger maximum velocity than vehicle $i$, that is $V_{i+1}>V_{i}$. 
Since the vehicles in front of $i$ are faster than $i$, as the time goes to infinity, $\Gamma_{i}\left(t\right)$ goes to $0$. So for any given $\epsilon$, there exists some $t_\epsilon$ such that for $t>t_\epsilon\geq 0$ we have $\Gamma_{i}\left(t\right)<\epsilon$.
For any time $t>t_\epsilon$, We can rewrite Eq. \ref{eq-model1} for vehicle $i+1$ as
\begin{equation*}
\frac{dx_{i+1}}{dt}=
\end{equation*}
\begin{equation*}
    V_{i+1}\left(1-\left(\frac{1}{\kappa}\exp\left(\frac{x_{i+1}-x_i}{\omega}\right)+\Gamma_i\left(t\right)\right)\right)>
\end{equation*}
\begin{equation}\label{Eq-V_i+1 epsilon}
    V_{i+1}\left(1-\frac{1}{\kappa}-\epsilon\right).
\end{equation}
For vehicle $i$, we have $\frac{dx_i}{dt}\leq V_i$. To prove the passing occurs, it is sufficient to show $\frac{dx_{i+1}}{dt}\geq V_i+\epsilon'$ for all $t>t_\epsilon$ and some fixed $\epsilon'\geq 0$. Eq. \ref{Eq-GuaranteedSort} implies that

\begin{equation}\label{Eq-kappa epsilon}
\kappa>\frac{V_{i+1}}{V_{i+1}-V_{i}}\implies
\frac{1}{\kappa}=1-\frac{V_{i}}{V_{i+1}}-\epsilon''
\end{equation}
for some $\epsilon''>0$.
Replacing Eq. \ref{Eq-kappa epsilon} in Eq. \ref{Eq-V_i+1 epsilon}, yields
\begin{equation}
    \frac{dx_{i+1}}{dt}>V_i+V_{i+1}(\epsilon''-\epsilon)>V_i
\end{equation}
where the last inequality holds for any $\epsilon<\epsilon''$ and we pick one such $\epsilon$. Therefore $i+1$ will pass $i$ which will be a contradiction. Therefore, the order is only stable, if it is sorted according to the maximum velocities.

Now, to prove the other direction of the theorem, we show that for any value of $\kappa$, there exists at least a sequence of vehicles for which $0< \kappa\leq \max_{i,j}\left(\frac{V_j}{V_j-V_i}\right)$ holds and whose order is stable even though not sorted.
Let $M=\max_{i,j}\left(\frac{V_j}{V_j-V_i}\right)$. Without loss of generality, for any fixed $M$, our example only consists of two vehicles with velocity $V_0$ and $V_1$ where $V_1>V_0$ and $V_0$ and $V_1$ are chosen such that they satisfy $M=\frac{V_1}{V_1-V_0}$. Since we do not have $\kappa>\frac{V_j}{V_j-V_i}$, using lemma \ref{Thm-PassingThreshold} (see Appendix), vehicle $1$ will not pass vehicle $0$ and the order therefore will be stable but not sorted. 
\end{proof}

We summarize these results together with corollary  \ref{Thm-DW<=1} on the effect of $\kappa$ on how the model operates in Table \ref{table-DW}.

\begin{table}[!t]
\renewcommand{\arraystretch}{2.5}
\caption{Effect of capacity ($\kappa$) on the model's behavior}
\label{table-DW}
\centering
\begin{tabular}{|p{4cm}||p{3cm}|}
\hline
\textbf{Capacity ($\kappa$)} & \textbf{Model's behavior}\\
\hline
\textbf{Low:} \newline
$\kappa \leq 1$ & \textbf{Blocking regime}: \newline No vehicle can pass\\
\hline
\textbf{Medium:} \newline

$1<\kappa\leq\max_{i,j, i\ne j}\left(\frac{V_j}{V_j-V_i}\right)$ & \textbf{Passing regime}: \newline Initial position of vehicles determines the final ordering; that is which vehicles will end up passing\\
\hline
\textbf{High:} \newline
$\kappa>\max_{i,j,i\ne j}\left(\frac{V_j}{V_j-V_i}\right)$ &  \textbf{Passing regime}: \newline All faster vehicles end up ahead of slower ones\\
\hline
\end{tabular}
\end{table}

\begin{figure}[!t]
\centering
\includegraphics[trim=0 0 0 0, clip,width=3.5in]{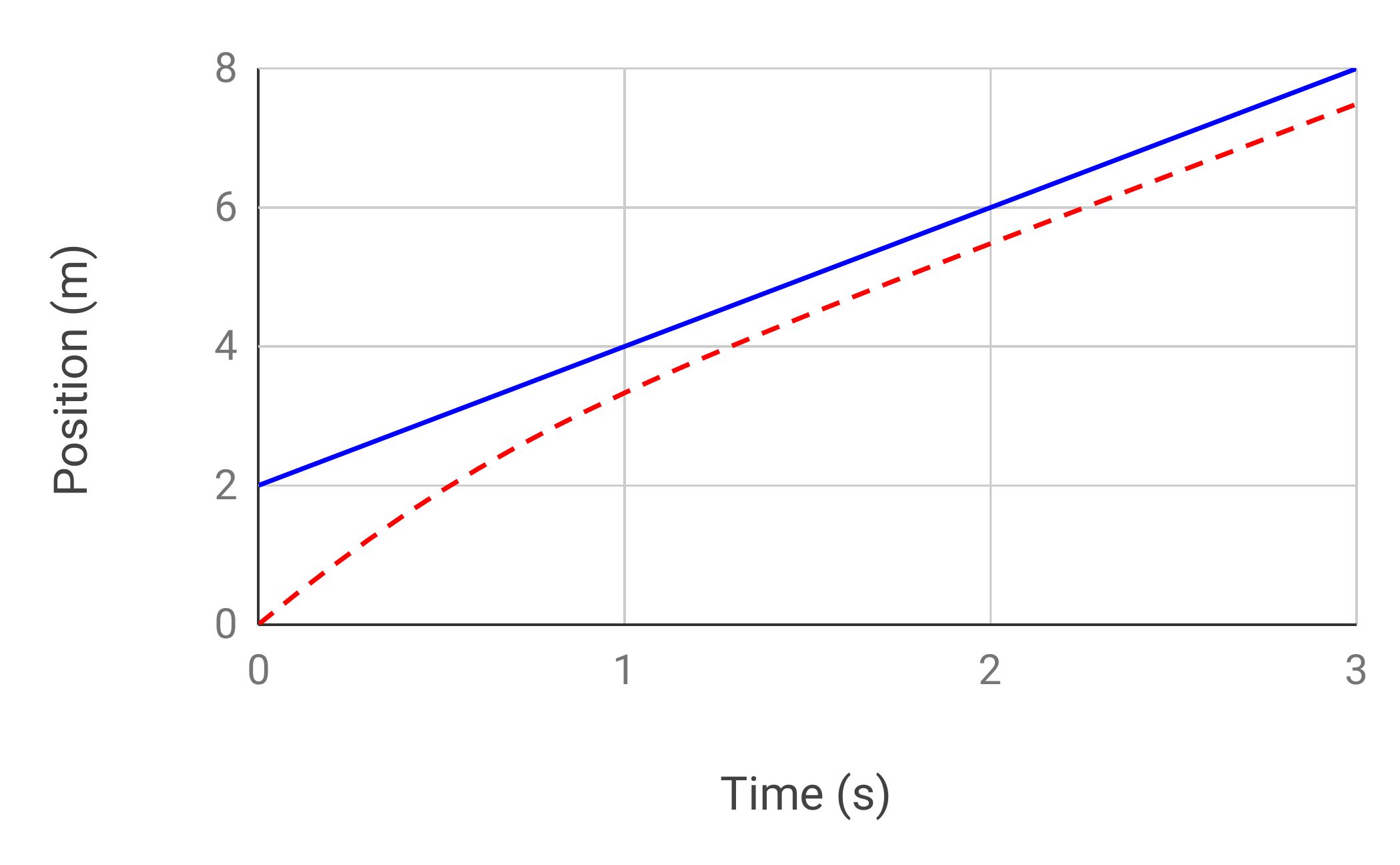}
\caption{In this case, due to the low capacity of the link ($\kappa = 1$) a faster vehicle gets stuck behind a slower vehicle.}
\label{fig:DW=1}
\end{figure}
\begin{figure}[!t]
\centering
\includegraphics[trim=0 0 0 0, clip,width=3.5in]{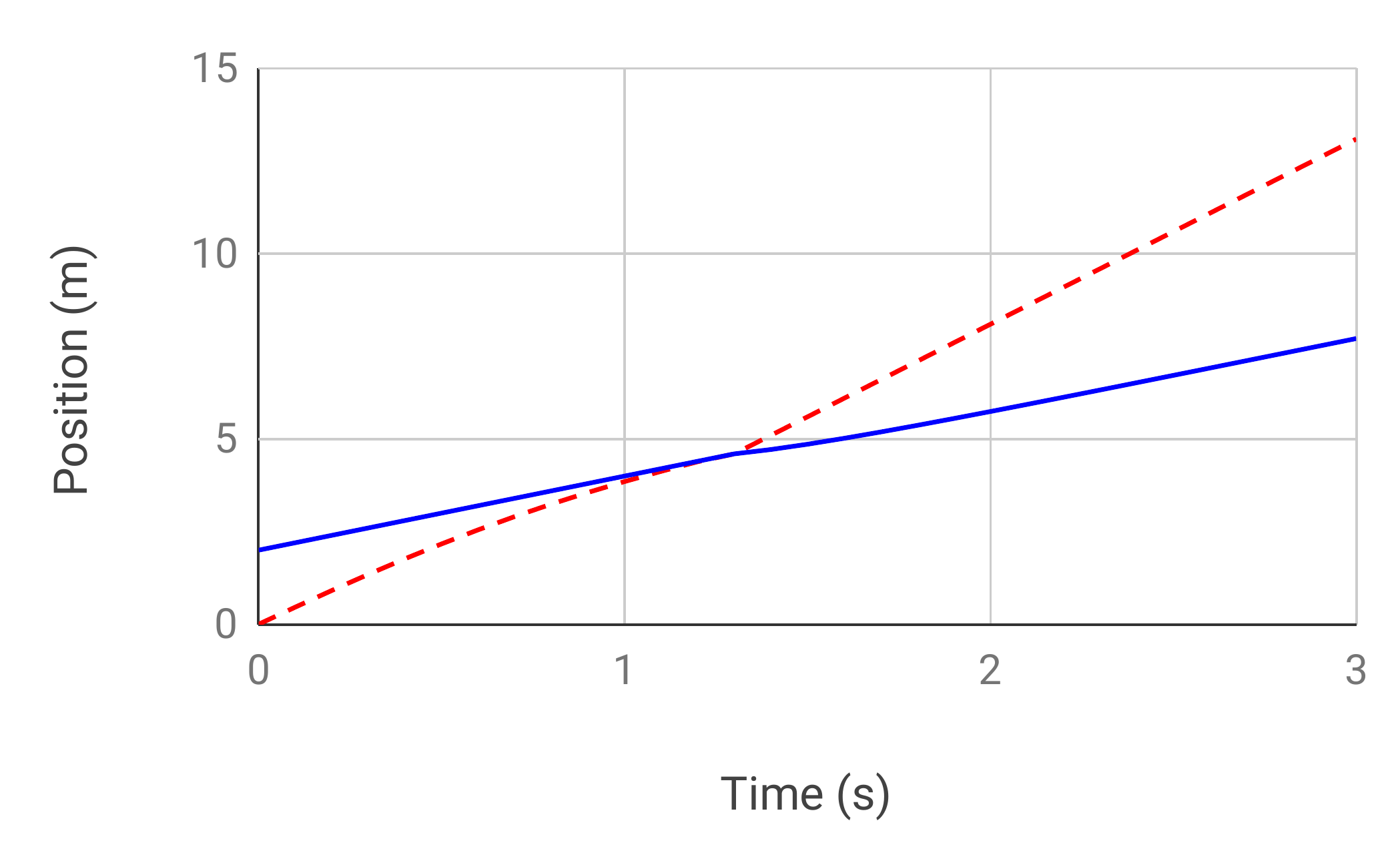}
\caption{In this case, the link has enough capacity ($\kappa = 2$) and a faster vehicle easily passes a slower vehicle.}
\label{fig:DW=2}
\end{figure}

\subsection{Asymptotic behavior in blocking regime: linear stability analysis}
The standard tool to study the asymptotic behavior in this case is stability analysis. We will study linear stability analysis for vehicles placed on an infinitely long road.
\subsubsection{Equilibrium point for the infinite road case}
Our state variables are the velocities of each vehicles excluding the first vehicle which has a constant velocity of $V_0$. Assuming we have a sequence of $N$ vehicles such that $V_i> V_0$ for all $i\leq N-1$, there exists an equilibrium point where all vehicles travel at the same velocity $v^{eq}=v_i^{eq}=V_0$.

\subsubsection{Local (platoon) and string stability analysis}
\begin{theorem}
In the blocking regime, given a platoon of $N$ vehicles, the Scalar Capacity Model has both local (platoon) and string stability. 
\end{theorem}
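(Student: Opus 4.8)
The plan is to linearize the model about the constant-velocity equilibrium identified just above (all vehicles moving at $V_0$ with fixed gaps $d_i$ pinned by $\Gamma_i^{eq}=1-V_0/V_i$) and then verify the two standard criteria of \cite{Tre13,Wil11}. Writing $x_i(t)=V_0t+d_i+\delta_i(t)$, I would linearize Eq. \ref{Eq d^2x_i/dt^2}, noting that the velocity differences $\dot x_j-\dot x_i$ are already first order so each exponential is frozen at its equilibrium value $\exp\!\big((d_i-d_j)/\omega\big)$. With $u_i:=\dot\delta_i=v_i-V_0$ and $a_{ij}:=\tfrac1\kappa\exp\!\big(\tfrac{d_i-d_j}{\omega}\big)$, and using $\sum_{j<i}a_{ij}=\Gamma_i^{eq}=1-V_0/V_i$ to collect the diagonal term, this yields the constant-coefficient system
\[
\dot u_i=-\tfrac{V_i-V_0}{\omega}\,u_i+\tfrac{V_i}{\omega}\sum_{0\le j<i}a_{ij}\,u_j ,
\]
in which vehicle $0$ enters only as an exogenous input and $u_i$ couples exclusively to lower indices.

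For local (platoon) stability I would order the state as $(u_1,\dots,u_{N-1})$ and observe that the system matrix is \emph{lower triangular}, precisely because each vehicle reacts only to the vehicles ahead of it. Its eigenvalues are therefore exactly the diagonal entries $-(V_i-V_0)/\omega$, which are strictly negative under the standing hypothesis $V_i>V_0$. Hence the linearization is asymptotically stable and any bounded perturbation of a finite platoon decays, which is platoon stability.

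String stability is the substantive part, and the obstacle is that each vehicle anticipates \emph{all} of its leaders, so there is no obvious map from one vehicle to the next. I would specialize to a homogeneous platoon ($V_i\equiv V>V_0$, uniform gap $h$), whereupon $a_{ij}=r^{\,i-j}/\kappa$ with $r:=e^{-h/\omega}\in(0,1)$, and pass to the Laplace domain. Setting $G(s):=\dfrac{V/\omega}{s+p}$ with $p:=(V-V_0)/\omega$, the key move is to telescope the geometric anticipation sum: for $S_i:=\sum_{j<i}r^{\,i-j}\hat u_j$ one checks the identity $S_i=r\big(\hat u_{i-1}+S_{i-1}\big)$, and since $\hat u_i=(G/\kappa)\,S_i$ this collapses the all-leaders coupling into a single, $i$-independent successive-vehicle transfer function
\[
H(s):=\frac{\hat u_i(s)}{\hat u_{i-1}(s)}=r\Big(1+\tfrac{G(s)}{\kappa}\Big)=r\,\frac{s+q}{s+p},\qquad q:=p+\tfrac{V}{\kappa\omega}>p .
\]

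It then remains only to bound $|H|$. The equilibrium relation for the uniform platoon, $\Gamma^{eq}=\tfrac1\kappa\sum_{k\ge1}r^{k}=\tfrac{r}{\kappa(1-r)}=1-\tfrac{V_0}{V}$, rearranges exactly to $rq=p$, so the DC gain is $H(0)=rq/p=1$. Because $q>p$, the magnitude $|H(j\Omega)|^2=r^2\,\dfrac{\Omega^2+q^2}{\Omega^2+p^2}$ is monotonically decreasing in $\Omega$, equal to $1$ at $\Omega=0$ and tending to $r^2<1$ as $\Omega\to\infty$; hence $|H(j\Omega)|\le1$ at every frequency, which is the string-stability criterion. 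The delicate points I expect to check are the telescoping identity that eliminates the multi-anticipation, and the fact that the equilibrium spacing forces the DC gain to be exactly unity, so that the bound is marginal at zero frequency and strict elsewhere.
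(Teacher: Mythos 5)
Your linearization is correct and your local-stability argument coincides with the paper's: both reduce to a lower-triangular constant-coefficient system whose diagonal entries (hence eigenvalues) are $-(V_i-V_0)/\omega<0$; the paper writes the system in the gap perturbations $y_i$ while you write it in the velocity perturbations $u_i$, but the spectrum is the same. Where you genuinely diverge is on string stability. The paper's proof stops at the triangular-matrix observation: it notes that $A$ is Hurwitz with eigenvalues $(V_0-V_n)/\omega$ independent of $N$ and declares string stability on that basis, for arbitrary heterogeneous $V_i$. You instead verify the standard frequency-domain criterion of \cite{Tre13,Wil11}: the telescoping identity $S_i=r(\hat u_{i-1}+S_{i-1})$ is the key new ingredient, it correctly collapses the all-leaders coupling into the $i$-independent first-order transfer function $H(s)=r\,\frac{s+q}{s+p}$, and your algebra checks out --- the equilibrium relation $\frac{r}{\kappa(1-r)}=1-\frac{V_0}{V}$ is exactly equivalent to $rq=p$, so $H(0)=1$ and $|H(j\Omega)|$ decreases monotonically to $r<1$. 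This is a stronger and more standard notion of string stability (no amplification of disturbances at any frequency as they propagate down the string) than what the paper actually establishes. The price is two idealizations you should state explicitly: (i) your string-stability half covers only the homogeneous platoon $V_i\equiv V$, whereas the theorem and the paper's argument allow heterogeneous $V_i$; and (ii) the uniform gap $h$ with geometric weights $a_{ij}=r^{i-j}/\kappa$ and $\Gamma^{eq}=\sum_{k\ge1}r^k/\kappa$ is exact only in the infinite-platoon limit --- for finite $N$ the equilibrium gaps must vary with index so that $\Gamma_n^{eq}=1-V_0/V$ holds with only $n$ leaders, and your single transfer function then describes the far-upstream asymptotics rather than every consecutive pair. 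Both restrictions are conventional in string-stability analysis, so the argument stands, but as written it proves a (better) statement that is not literally the one in the theorem; the paper's triangular-matrix argument, conversely, proves a weaker property under weaker hypotheses.
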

\begin{proof}
We first prove string stability of the model and then the local (platoon) stability will follow as a special case.
Our state variables are the velocities of all vehicles except the leader $V_0$ as it is fixed. Without loss of generality, we assume $V_i> V_0$ for all $i>0$ (otherwise, we do not have a single platoon according to lemma \ref{Thm-EqualSpeedSplitPlatoon} in the Appendix). We define the gap in front of vehicle $i$ and the gap between vehicle $n$ and $i$, respectively, as
\begin{equation}\label{EQ s_i}
s_{i} := x_{i-1}-x_{i},
\end{equation}
\begin{equation}\label{EQ s_n_i}
s_{n,i} := \sum_{i+1\leq j\leq n} s_j.
\end{equation}

In the equilibrium we have
\begin{equation}\label{Eq in equib}
v_{n}=v^{eq}=V_{n}\times
\left(1-\frac{1}{\kappa}\sum_{0\leq i< n}\exp\left(\frac{-s_{n,i}^{eq}}{\omega}\right)\right).
\end{equation}

Now we apply a small perturbation to the velocity of each follower as follows

\begin{equation}\label{Eq v_i 2}
v_i = v^{eq} + u_i\left(t\right),
\end{equation}
\begin{equation}\label{Eq s_i 2}
s_i = s^{eq} + y_i\left(t\right).
\end{equation}
From Eq. \ref{EQ s_i}, Eq. \ref{Eq v_i 2}, and Eq. \ref{Eq s_i 2}, we have
\begin{equation}\label{EQdyidtu}
\frac{dy_i}{dt}=-u_i\left(t\right).
\end{equation}
For $y_i$'s we define an identity similar to Eq. \ref{EQ s_n_i} as follows
\begin{equation}
y_{n,i} := \sum_{i+1\leq j\leq n} y_j.
\end{equation}
Assuming we kick all the follower vehicles out of equilibrium, for the $n$'th vehicle we will have

\begin{equation*}
v_{n}=v^{eq}+u_{n}\left(t\right)=V_{n}\times
\end{equation*}
\begin{equation}
\left(1-\frac{1}{\kappa}\sum_{0\leq i< n}\exp\left(\frac{-s_{n,i}^{eq}-y_{n,i}\left(t\right)}{\omega}\right)\right).
\end{equation}
After linearization and simplification using Eq. \ref{Eq in equib}, we get

\begin{equation*}
v^{eq}+u_{n}\left(t\right)=V_{n}\times 
\end{equation*}
\begin{equation*}
\left(1-\frac{1}{\kappa}\sum_{0\leq i< n}\exp\left(\frac{-s_{n,i}^{eq}}{\omega}\right)\left(1-\frac{y_{n,i}}{w}\right)\right)\implies
\end{equation*}
\begin{equation}\label{EQu_N-1String0}
u_{n}\left(t\right)=\frac{V_{n}}{\kappa\omega}\sum_{0\leq i< n}\exp\left(\frac{-s_{n,i}^{eq}}{\omega}\right)y_{n,i}.
\end{equation}
By replacing Eq. \ref{EQdyidtu} in Eq. \ref{EQu_N-1String0} and expanding $y_{n,i}$ according to its definition, we will get
\begin{equation}\label{EQu_N-1String}
\frac{dy_{n}}{dt}=\frac{-V_{n}}{\kappa\omega}\sum_{0\leq i<j\leq n}\exp\left(\frac{-s_{n,i}^{eq}}{\omega}\right) y_j.
\end{equation}
We can write Eq. \ref{EQu_N-1String} for $1\leq n\leq N-1$ for all vehicles in a matrix form as
\begin{equation}
\frac{dY}{dt} = AY.
\end{equation}
By inspection, $A$ is a lower triangular matrix with only negative elements. Since the eigenvalues of a lower triangular matrix are the elements of the diagonal, all the eigenvalues of the matrix are negative. Therefore, according to the linear stability theory, variables $y_i$ are stable with an equilibrium point of all $0$s. Hence, a similar thing can be said about $u_i$.
To understand the rate of convergence, we calculate the eigenvalues which are the elements of the diagonal of $A$. In other words, the eigenvalues $\lambda_{n}$ are the coefficients of $y_n$ in Eq. \ref{EQu_N-1String}. By inspection, we have
\begin{equation}\label{Eq lamba_n}
\lambda_{n}=\frac{-V_{n}}{\kappa\omega}\sum_{0\leq i< n}\exp\left(\frac{-s_{n,i}^{eq}}{\omega}\right)
=
\frac{V_0-V_n}{\omega}
\end{equation}
where the last equality is due to Eq. \ref{Eq in equib} and  knowing $v^{eq}=V_0$. Therefore, the bigger the difference between $V_0$ and $V_n$, the faster the convergence will be to the equilibrium point.
The above proved the string stability of the model. Linear (platoon) stability is proven by considering the special case where there is $0$ perturbation to the position and velocity of vehicles $1\leq i\leq N-2$ (that is $y_i=0$ and $u_i=0$)
\end{proof}
\subsection{Asymptotic behavior in blocking regime: nonlinear stability analysis}
In this section, We perform the non-linear stability analysis for vehicles placed on a ring road. Note that Eq. \ref{eq-model2} and Eq. \ref{eq-model3} are adjusted accordingly to become symmetrical for any vehicle $i$ (i.e. now each vehicle regardless of their numbering is a follower to every other vehicle on the ring road).
\subsubsection{Equilibrium point for the ring road}
Given a fleet of identical vehicles with maximum velocity $V_{max}$ travelling on a ring road, the exact locations of each vehicle is not important to us. However, their relative distance is important, we take the set of velocities $v_i$ 
as our state variables. Since the motion equations for all vehicles are symmetrical on the ring road, an immediately obvious equilibrium point is the case where all velocities are identical. This is equivalent to saying that all gaps are identical; that is \begin{equation}s^{eq}=s_i=\frac{L}{N}\end{equation} where $L$ is the circumference of the ring road and $N$ is the number of vehicles. We define the overall density $\rho$ as $\rho=\frac{1}{s^{eq}}$. The equilibrium velocity is calculated as follows:

\begin{equation*}
v^{eq} = V_{max}\left(1-\frac{1}{\kappa}\sum_{1\leq j\leq N-1}\exp\left(\frac{-js^{eq}}{\omega}\right)\right)=
\end{equation*}
\begin{equation}
V_{max}\left(1+\frac{1}{\kappa}-\frac{1}{\kappa}\sum_{0\leq j\leq N-1}\exp\left(\frac{-js^{eq}}{\omega}\right)\right).
\end{equation}
Using the identity for the sum of geometric series, we obtain

\begin{equation*}
v^{eq}=V_{max}\left(1+\frac{1}{\kappa}-\frac{1}{\kappa}\cdot\frac{1-\exp\left(\frac{-jNs^{eq}}{\omega}\right)}{1-\exp\left(\frac{-js^{eq}}{\omega}\right)}\right)=
\end{equation*}
\begin{equation}
v^{eq}=V_{max}\left(1+\frac{1}{\kappa}-\frac{1}{\kappa}\cdot\frac{1-\exp\left(\frac{-L}{\omega}\right)}{1-\exp\left(\frac{-1}{\rho\omega}\right)}\right).
\end{equation}
Vehicle flow, velocity, and density are related by $Q=v^{eq}\rho$ which results in the diagram in Fig. \ref{fig:fundamentalDiagram} relating the traffic flow $Q$ to the density $\rho$. This graph is also one of the fundamental diagrams of a traffic flow model and it gives insight into the macroscopic behavior of our microscopic model. It also gives an intuitive justification for the soundness of our model, since all traffic flow models (including those cited in this work) produce a more or less similar graph. That is the traffic flow increases as the density increases till we reach a peak capacity after which adding any more vehicles will only serve to decrease the flow.

\begin{figure}[!t]
\centering
\includegraphics[trim=0 0 0 0, clip,width=3.5in]{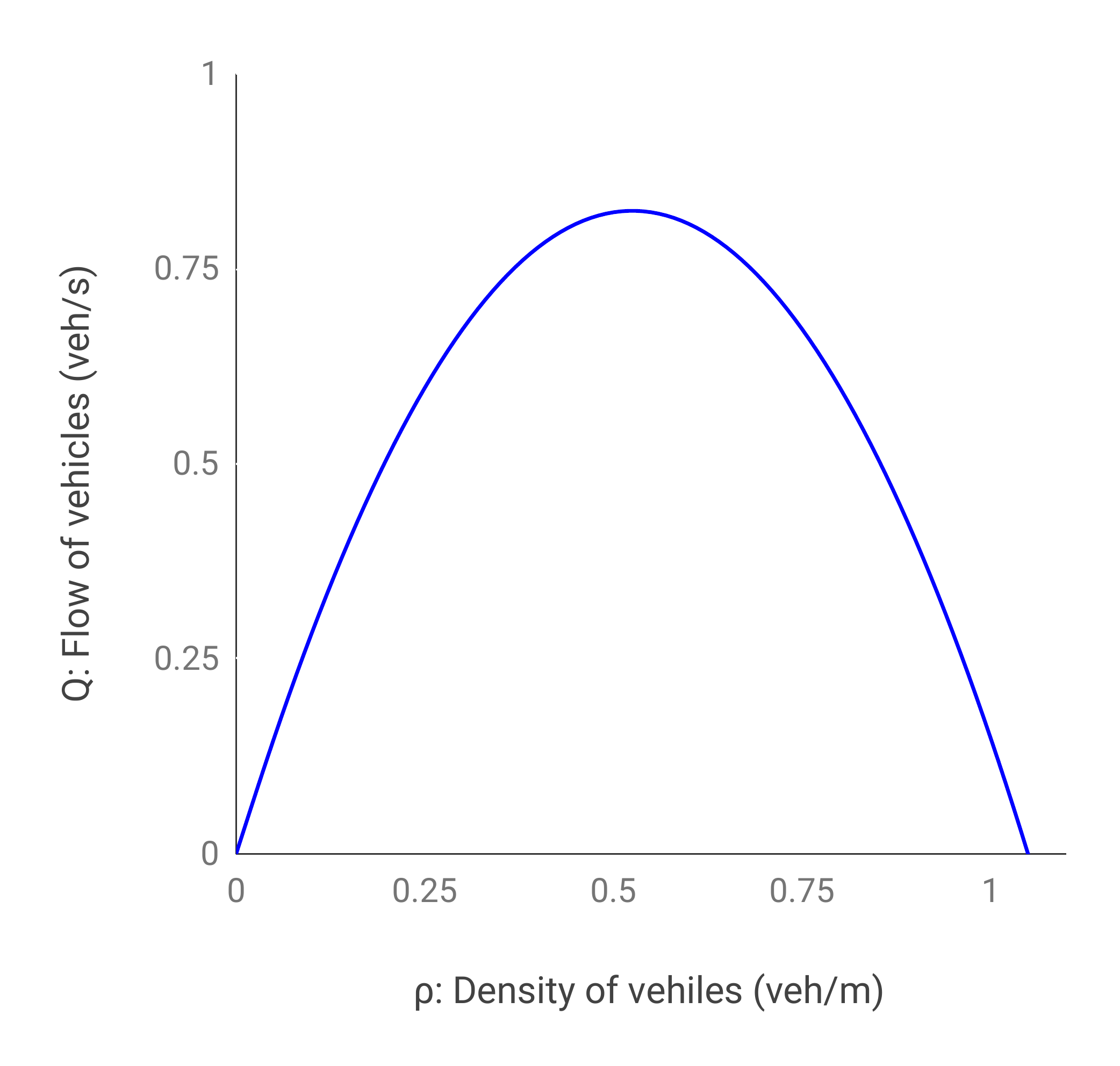}
\caption{Vehicle flow versus density: For our microscopic model, this graph shows the macroscopic relationship between the number of identical vehicles passing a fixed point on a ring road per unit of time and the density of vehicles. As is expected from a traffic flow model, after a peak density matching the available capacity is reached, traffic flow starts deteriorating in the sense that any more vehicles only serves to slow down every vehicle. Before this peak, the traffic is in the free flow regime and then switches to congested.}
\label{fig:fundamentalDiagram}
\end{figure}

\subsubsection{Numerical experiments}
In this section, we initiate the system with a variety of conditions and observe whether the system will approach to the equilibrium point.
We will use a chosen background density composed with a smaller region of higher density. We will observe how this irregularity will affect the system's stability. Our experiments parameters are chosen as follows:
\begin{itemize}
\item $L = 1000m$, length of the ring road ($m$)
\item $\omega = 10m$, length of the horizon in front of each vehicle.
\item $V_{max} = 6m/s$, for all vehicles
\item $\kappa = 10$, model's capacity
\item $t_{start}=0s, t_{end}=500s$, start and finish time of simulation
\item $\rho=0.5\frac{veh}{m}$, global density of vehicles. In other words, we have $500$ vehicles on the ring road (a minimum number of vehicles that is required for a realistic simulation \cite{Tre13}).
\end{itemize}
To produce Fig. \ref{TimeSpace30} and \ref{TimeGap30} we distribute the vehicles in two regions. One region consists of 30\% of the ring road and has the highest possible uniform density of the vehicles and the remaining vehicles are distributed in the rest of the ring road evenly; so to make the overall density $\rho$ as above. These experiments provide evidence that no matter how far from equilibrium the system is, it will converge to the equilibrium. Also, we performed the same test with the same number of vehicles when 10\% or 20\% of the ring road had a maximal traffic jam and each case produced essentially the same graphs.

\begin{figure}[!t]
\centering
\includegraphics[trim=0 0 0 0, clip,width=3.5in]{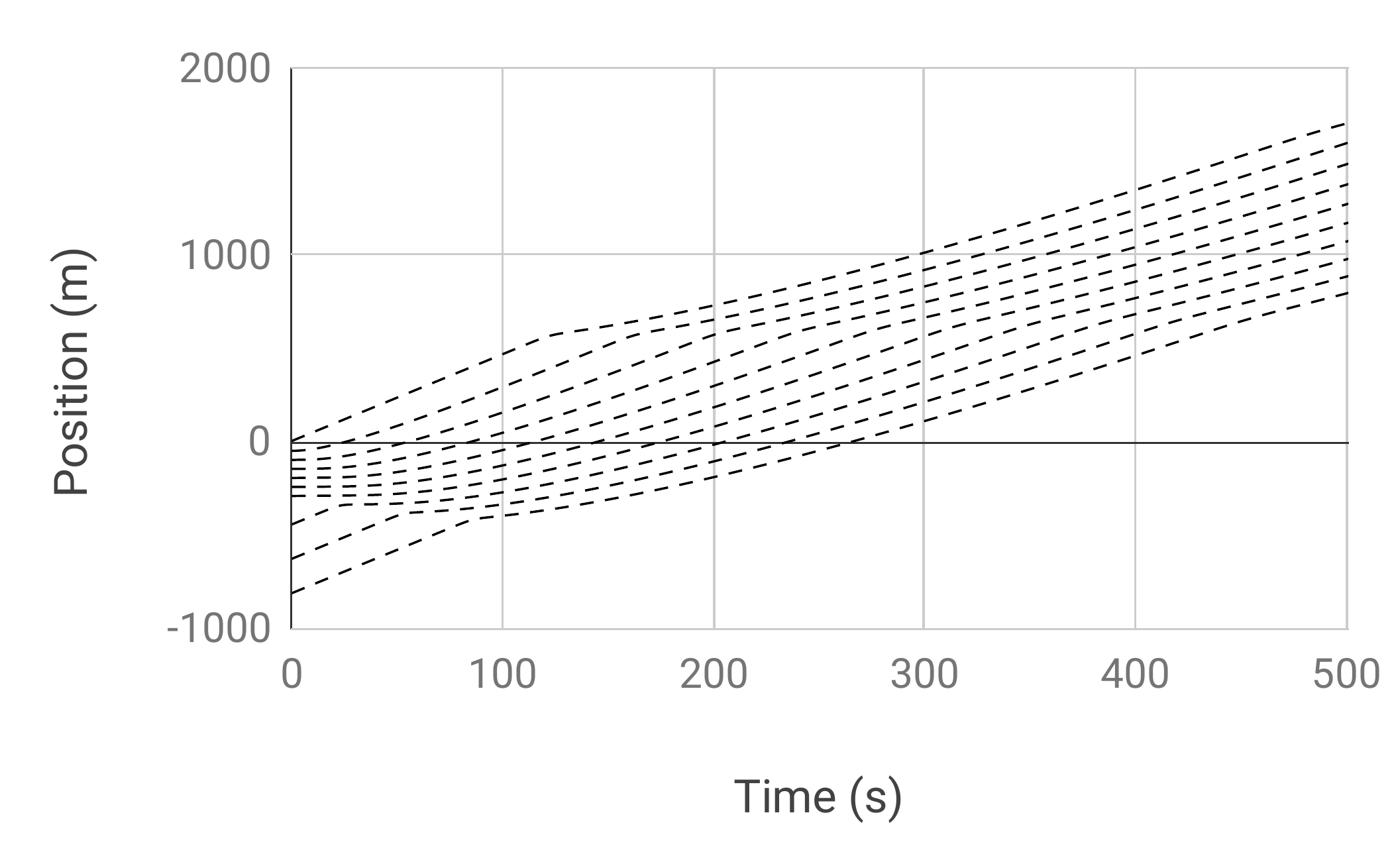}
\caption{Time-Space Diagram for every 50'th vehicles: $30\%$ of the ring road has a maximal uniform vehicle density of $1.03\frac{veh}{m}$ and the remaining $70\%$ has a uniform density of $0.27\frac{veh}{m}$. The ring road vehicle density is $0.5\frac{veh}{m}$. Initially, some vehicles are slowed down, but as time goes on, all the velocities converge to the equilibrium velocity.}
\label{TimeSpace30}
\end{figure}

\begin{figure}[!t]
\centering
\includegraphics[trim=0 0 0 0, clip,width=3.5in]{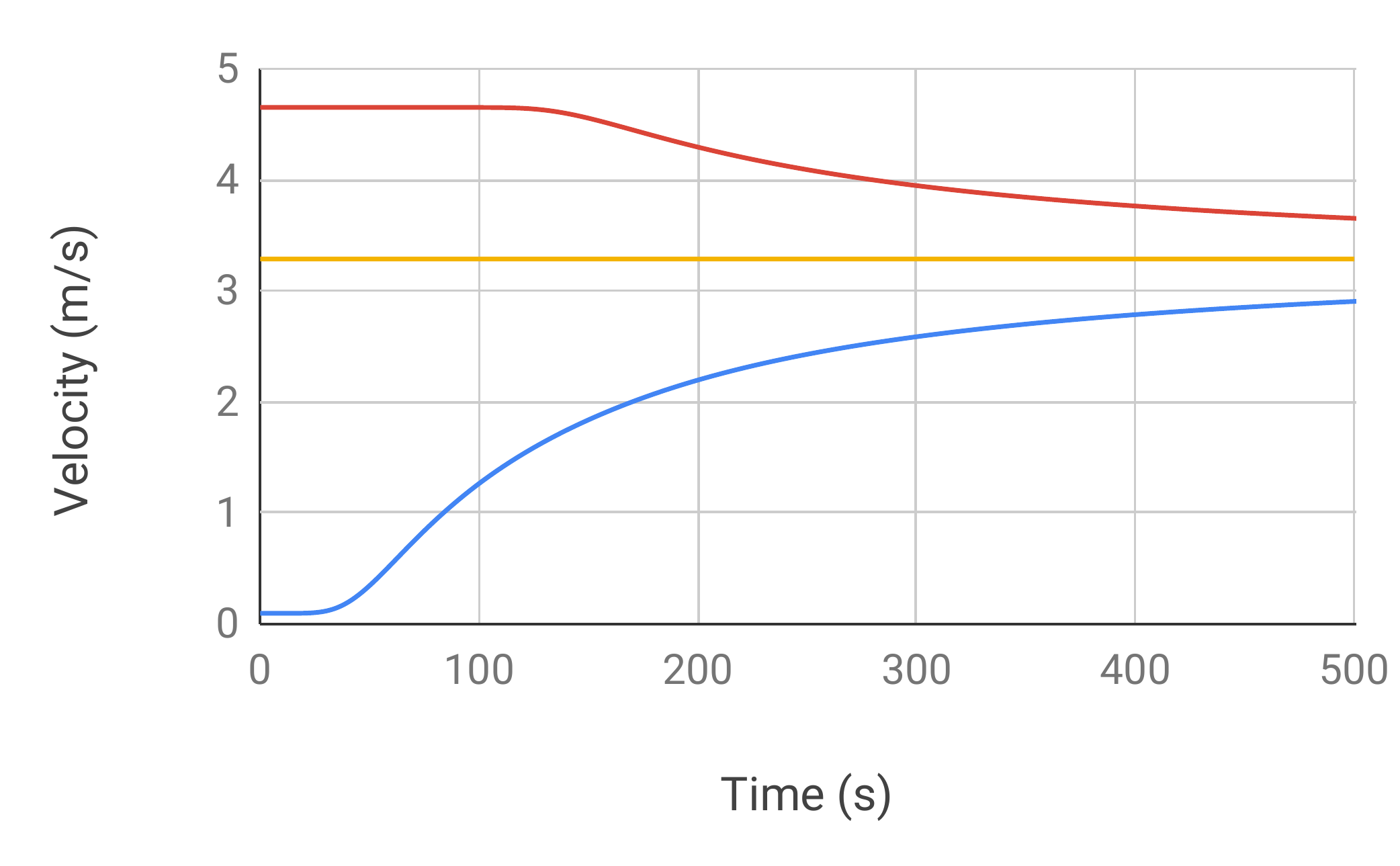}
\caption{Minimum and maximum momentary velocity among all vehicles: $30\%$ of the ring road has a maximal uniform vehicle density of $1.03\frac{veh}{m}$ and the remaining $70\%$ has a uniform density of $0.27\frac{veh}{m}$. The ring road vehicle density is $0.5\frac{veh}{m}$. This graph shows the convergence to equilibrium velocity.}
\label{TimeGap30}
\end{figure}

\section{Discussion \& Future work}
Our work leaves many open questions. An important question is whether it is possible to add some mechanism for delay, without losing the closed form analytical solution feature of the model. 

Another avenue for research is adding some dummy vehicles to play the role of moving or stationary obstacles for the traffic flow. This works by consuming the capacity of the airway/road dynamically. It organically gives rise to inclusion of obstacles without modifying the model. In the same vein, it is possible to add weights to the exponential congestion term for different vehicles or dummy vehicles (obstacles). Currently in our model, all these weights are  equal. The ability to set weights can give us powerful tools for tuning the strength of these obstacles.

\section{Conclusion}
In this paper, we introduced a microscopic traffic flow model that can be used to study traffic patterns of unmanned aerial vehicles in the air as they become ubiquitous in the future. The model is equally applicable to the study of the traffic flow of ground vehicles on the road. We advanced the state of art by introducing a scalar capacity parameter for the airway (or roads) rather than the traditional approach of modelling links as 1 lane or multi-lane. This is suited for the study of the 3D nature of UAV flights as opposed to the 2D nature of ground vehicles movements while also resulting in a simpler model for ground vehicles by abstracting away the pass planning aspect. By adjusting the scalar capacity parameter, the model can exhibit passing or blocking behaviors. In the former, vehicles are free to pass each other while in the latter, no vehicle can pass another one similar to a one lane road. Our model can be solved analytically for the blocking regime and piece-wise analytically in the passing regime. For the blocking regime we proved linear local (platoon) stability as well as string stability. Also, using numerical simulation, we show evidence for non-linear stability. For the passing regime, we proved theorems outlining the asymptotic behavior of the model such as whether every faster vehicle gets a chance to pass slower vehicles as time goes to infinity and what the final order of vehicles will be after all the overtakings are completed. Lastly, we proved a main theorem characterizing the transition from blocking to passing as we adjust the scalar capacity parameter.

\appendix  

\begin{lemma}[Passing threshold]\label{Thm-PassingThreshold}
Given only two vehicles on the road with $V_1>V_0$, given enough time, vehicle $1$ will pass vehicle $0$  if and only if $\kappa>\frac{V_1}{V_1-V_0}$.
\end{lemma}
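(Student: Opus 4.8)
The plan is to reduce the two-vehicle system to a single autonomous scalar ODE for the gap and then read off the passing behavior from its phase line. Since vehicle $0$ has no vehicle ahead of it, Eq.~\ref{eq-model2} gives $\Gamma_0=0$ and Eq.~\ref{eq-model3} gives $x_0(t)=x_0(t_0)+V_0(t-t_0)$. Writing $s:=x_0-x_1$ for the gap (which is positive while vehicle $0$ is still ahead), Eq.~\ref{eq-model1} and Eq.~\ref{eq-model2} combine to
\[
\frac{ds}{dt}=V_0-V_1\left(1-\frac{1}{\kappa}\exp\left(\frac{-s}{\omega}\right)\right)=V_0-V_1+\frac{V_1}{\kappa}\exp\left(\frac{-s}{\omega}\right)=:f(s).
\]
Vehicle $1$ passes vehicle $0$ exactly when $s$ reaches $0$ in finite time starting from $s(t_0)>0$, so the whole claim becomes a statement about the scalar dynamics $\dot s=f(s)$.

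First I would record the two elementary properties of $f$: it is strictly decreasing in $s$ (because $e^{-s/\omega}$ is), and $f(+\infty)=V_0-V_1<0$. The decisive quantity is the sign of $f$ at the contact point $s=0$, namely $f(0)=V_0-V_1+\tfrac{V_1}{\kappa}$, which satisfies $f(0)<0$ if and only if $\kappa(V_1-V_0)>V_1$, i.e. if and only if $\kappa>\frac{V_1}{V_1-V_0}$. This matches the instantaneous passing criterion of Theorem~\ref{Thm-PassingCriteriaFor2} specialized to $\Gamma_0=0$, so the ODE picture is consistent with the earlier result; the remaining work is to upgrade that instantaneous statement into the asymptotic ``given enough time'' statement.

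For the forward direction, when $\kappa>\frac{V_1}{V_1-V_0}$ monotonicity gives $f(s)\le f(0)<0$ for every $s\ge 0$, so $s$ decreases at a rate bounded away from $0$ and hence hits $0$ in finite time; passing occurs. For the converse, when $\kappa<\frac{V_1}{V_1-V_0}$ we have $f(0)>0$, and together with $f(+\infty)<0$ and strict monotonicity there is a unique zero $s^{eq}>0$ that is globally attracting on $(0,\infty)$; the gap converges to $s^{eq}>0$ and never reaches $0$, so no passing.

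I expect the delicate point to be the boundary case $\kappa=\frac{V_1}{V_1-V_0}$. There $f(0)=0$ makes $s=0$ an equilibrium, while for $s>0$ we still have $f(s)<0$, so the gap decreases toward $0$ but only asymptotically and never crosses it, giving no pass in finite time. Treating this case carefully is exactly what pins down the \emph{strict} inequality in the statement, and it is the main thing to get right; the two off-boundary cases reduce to routine sign analysis of the decreasing function $f$.
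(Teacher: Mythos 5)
Your proposal is correct, and it reaches the result by a more self-contained route than the paper. The paper's proof leans on Theorem~\ref{Thm-PassingCriteriaFor2}: for the ``only if'' direction it notes that $\Gamma_0=0$ forces the passing condition to read $\kappa>\frac{V_1}{V_1-V_0}$ at the contact time, and for the ``if'' direction it observes that $v_1$ is minimized when the gap is (hypothetically) zero, so $\kappa>\frac{V_1}{V_1-V_0}$ gives $v_1\geq v_{1_{min}}>V_0$ at all times, whence contact occurs in finite time and the pass completes. Your phase-line analysis of $\dot s=f(s)$ with $f(s)=V_0-V_1+\frac{V_1}{\kappa}e^{-s/\omega}$ encodes exactly the same monotonicity fact ($f$ decreasing, $f(s)\le f(0)$) for the forward direction, so that half is essentially the paper's argument in different clothing. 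Where you genuinely diverge is the converse: instead of invoking the necessity half of Theorem~\ref{Thm-PassingCriteriaFor2} at the contact instant, you show directly that for $\kappa<\frac{V_1}{V_1-V_0}$ the gap has a unique positive, globally attracting equilibrium $s^{eq}$ and hence never closes, and you isolate the boundary case $\kappa=\frac{V_1}{V_1-V_0}$ where $s=0$ is itself an equilibrium approached only asymptotically. This buys a more complete picture (the paper's converse is silent about what the gap actually does when passing fails, and does not discuss the boundary case explicitly), at the cost of needing one additional standard fact you should state when writing it up: that $f$ is locally Lipschitz, so by uniqueness of solutions the trajectory cannot reach the equilibrium $s=0$ in finite time in the boundary case. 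With that sentence added, your argument is complete and, if anything, tighter than the paper's.
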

\begin{proof} Proving vehicle $1$ passes vehicle $0$ implies $\kappa>\frac{V_1}{V_1-V_0}$ is a straightforward consequence of theorem \ref{Thm-PassingCriteriaFor2} since $\Gamma_0 = 0$ at all times including the passing time between vehicles $0$ and $1$. The velocity for vehicle $1$ at the passing time is in fact the smallest value for the follower's velocity (that is any starting gap will close eventually, if passing can be done if we magically put both vehicles in the same position).

In the other direction, we prove $\kappa > \frac{V_1}{V_1-V_0}$ implies for all times, that $v_1\left(t\right)>v_0\left(t\right)$. We first prove the two vehicles will meet as a condition for theorem \ref{Thm-PassingCriteriaFor2}, so we can apply that theorem.

$v_1$ takes its minimum value $v_{1_{min}}$ when vehicle $0$ and $1$ are (hypothetically) in the same position, and according to theorem \ref{Thm-PassingCriteriaFor2}, a pass will occur in that case since $\kappa > \frac{V_1}{V_1-V_0}$. Therefore, at all other times, $v_1\geq v_{1_{min}}>v_0\left(t\right) = V_0$ and that implies that there exists a time $t_p$ when the two vehicles will meet. Therefore, according to theorem \ref{Thm-PassingCriteriaFor2}, vehicle $0$ will pass vehicle $1$.
\end{proof}

Without loss of generality, assume vehicle $n$ is the first vehicle for which $V_n=V_0$. In the following lemma, we show if a vehicle $n$ in a sequence of vehicles following a leader with speed $V_0$ has a maximum speed $V_n=V_0$, then this will result in creation of two platoons. When $V_n<V_0$, this is easy to see. But when the maximum speeds are equal, one can see that this still holds. More formally:
\begin{lemma}[Platoon splitting]\label{Thm-EqualSpeedSplitPlatoon}
Given a platoon of $n$ vehicles in the blocking regime with an extra vehicle $n$ with $V_n =V_0$ and where vehicle $0$ to $n-1$ are in equilibrium, if $V_0<V_j$ for $1\leq j\leq n-1$, then vehicle $n$ is not part of the  platoon.
\end{lemma}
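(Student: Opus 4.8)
The plan is to exploit the one-directional (anticipative) structure of the model together with the fact that vehicle $n$ has maximum speed $V_n = V_0$, which is exactly the platoon's equilibrium velocity. Since in Eq. \ref{eq-model2} the congestion $\Gamma_i$ of vehicle $i$ depends only on the vehicles $j<i$ ahead of it, appending vehicle $n$ at the rear cannot influence vehicles $0$ through $n-1$. As these are assumed to be in equilibrium, they continue to travel rigidly at $v^{eq}=V_0$ with their inter-vehicle gaps frozen at the equilibrium values; in particular $dx_{n-1}/dt = V_0$ for all $t$. This reduces the whole problem to tracking a single scalar, the gap $s_n := x_{n-1}-x_n$.

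First I would compute the rate of change of this gap. From Eq. \ref{eq-model1}, $dx_n/dt = V_n(1-\Gamma_n) = V_0(1-\Gamma_n)$, so $ds_n/dt = V_0 - V_0(1-\Gamma_n) = V_0\Gamma_n$. By Theorem \ref{Thm-Non-Negative-Velocity} and Eq. \ref{eq-model2} we have $\Gamma_n \ge 0$, so the gap is immediately seen to be non-decreasing; the real content is to show it is \emph{unbounded}, since a priori the gap could converge to a finite limit if $\Gamma_n$ decayed to zero fast enough.

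The key step handles this by contradiction. Suppose the gap stayed bounded, say $s_n(t)\le S$ for all $t\ge t_0$. Keeping only the nearest front term in Eq. \ref{eq-model2} gives $\Gamma_n \ge \frac{1}{\kappa}\exp(-s_n/\omega) \ge \frac{1}{\kappa}\exp(-S/\omega) =: \delta > 0$. Then $ds_n/dt = V_0\Gamma_n \ge V_0\delta > 0$, so $s_n(t) \ge s_n(t_0)+V_0\delta(t-t_0)\to\infty$, contradicting $s_n\le S$. Hence the gap between vehicle $n$ and the platoon grows without bound, and by the definition of a platoon (distances bounded by a constant) vehicle $n$ cannot belong to it. If a sharper statement is desired, writing the full congestion as $\Gamma_n = Ce^{-s_n/\omega}$ with the positive constant $C=\frac{1}{\kappa}\sum_{0\le j\le n-1}e^{-d_j/\omega}$, where $d_j := x_j - x_{n-1}\ge 0$ are the frozen equilibrium offsets, and integrating the separable equation $ds_n/dt = V_0 C e^{-s_n/\omega}$ yields the explicit logarithmic growth $s_n(t) = \omega\ln\left(\tfrac{V_0 C}{\omega}(t-t_0)+e^{s_n(t_0)/\omega}\right)$.

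The main obstacle is precisely this unboundedness step: the easy computation only delivers $ds_n/dt\ge 0$, and one must rule out convergence of the gap to a finite limit. The contradiction argument closes this cleanly, because any finite ceiling on $s_n$ keeps $\Gamma_n$ bounded away from zero and therefore forces at least linear growth. A secondary point that should be stated carefully is the justification that the leading $n$ vehicles remain exactly at equilibrium throughout; this rests entirely on the front-only form of Eq. \ref{eq-model2}, which guarantees that a trailing vehicle never feeds back into the dynamics of those ahead of it.
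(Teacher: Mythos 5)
Your proof is correct, and it takes a genuinely different route from the paper's. The paper argues through its closed-form solution (Eq.~\ref{eq z_i solution}): because $V_n=V_0$ while $V_0<V_j$ for $1\leq j\leq n-1$, the rate $-V_0/\omega$ appears with multiplicity two in the expression for $z_n$, contributing a resonant term $c_{n,0,1}\,t\exp\left(-V_0t/\omega\right)$; since $z_0$ decays like a pure exponential $\exp\left(-V_0 t/\omega\right)$, the ratio $z_n/z_0=\exp\left((x_0-x_n)/\omega\right)$ diverges and hence $x_0-x_n\to\infty$. You instead exploit the front-only structure of Eq.~\ref{eq-model2} to freeze the leading $n$ vehicles at $v^{eq}=V_0$ and reduce everything to the scalar gap $s_n=x_{n-1}-x_n$, obtaining $ds_n/dt=V_0\Gamma_n\geq \frac{V_0}{\kappa}e^{-s_n/\omega}$ and then ruling out a bounded gap by contradiction (or by the explicit logarithmic solution). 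Your version is more elementary and in one respect more complete: the paper's conclusion $\lim z_n/z_0=\infty$ tacitly requires the resonant coefficient $c_{n,0,1}$ to be nonzero, which is never verified there, whereas your comparison argument needs no such non-vanishing claim and additionally delivers the quantitative growth rate $s_n(t)\sim\omega\ln t$. What the paper's route buys is consistency with its analytic-solution machinery and a statement about $x_0-x_n$ directly rather than $x_{n-1}-x_n$ (the two are equivalent here since the leading gaps are frozen). The only shared implicit hypothesis worth flagging is $V_0>0$; for $V_0=0$ neither argument (nor the claim itself) goes through.
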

\begin{proof}[Proof of lemma \ref{Thm-EqualSpeedSplitPlatoon}]

From Eq. \ref{eq z_i solution} and knowing $V_0<V_j$ for $1\leq j\leq n-1$ and $V_0=V_n$, we have

\begin{equation*}
z_{n} = \left(c_{n,0,0}+c_{n,0,1}\cdot t\right)\exp\left(\frac{-V_0t}{\omega}\right)+
\end{equation*}
\begin{equation}\label{eq z_i solution special case}
    \sum_{\substack{j\in U-\{0\} \\ j\leq n}}\sum_{0\leq d<m_{n,j}}c_{n,j,d}\cdot t^d\exp\left(\frac{-V_jt}{\omega}\right).
\end{equation}
From the definition of $z_0$ in Eq. \ref{eq def z_i} we have 
\begin{equation}\label{eq z0 234}
    z_{0} = \exp\left(\frac{-x_0}{\omega}\right)=
    \exp\left(\frac{-x_0(0)-V_0t}{\omega}\right).
\end{equation}
Now, by using Eq. \ref{eq z_i solution special case} and Eq. \ref{eq z0 234}, we get
\begin{equation*}
    \lim_{t\rightarrow \infty} 
    \exp\left(\frac{x_0-x_n}{\omega}\right)=
    \lim_{t\rightarrow \infty} 
    \frac{z_n}{z_0}=\infty\implies
\end{equation*}
\begin{equation}
    \lim_{t\rightarrow \infty} \left(x_0-x_n \right)\rightarrow \infty.
\end{equation}
Therefore vehicle $n$ cannot be part of the same platoon of vehicles $0$ to $n-1$.
\end{proof}










%



\section*{Acknowledgment}
We would like to thank Claudio Ca\~{n}izares, Ehsan Nasr Azadani, and Amir Khajepour for fruitful discussions on the subject.

\ifCLASSOPTIONcaptionsoff
  \newpage
\fi



\bibliographystyle{IEEEtran}
\bibliography{IEEEabrv,refs}
%





%

\begin{IEEEbiography}[{\includegraphics[width=1in,height=1.25in,clip,keepaspectratio]{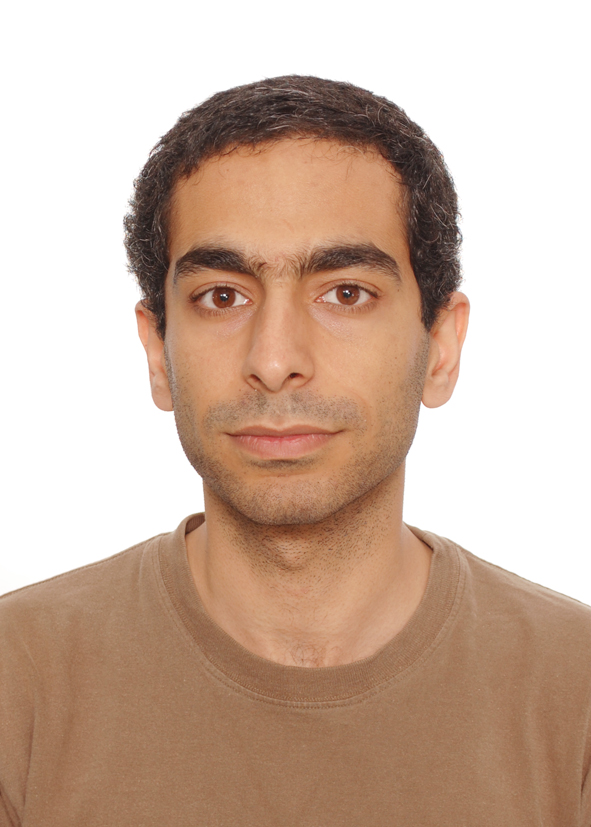}}]{Mirmojtaba Gharibi}
received the B.ASc. in electrical engineering from Sharif University of Technology in 2009. He completed his M.Math, and currently is pursuing a PhD degree both in Computer Science at University of Waterloo, Waterloo, Canada.
\end{IEEEbiography}

\begin{IEEEbiography}[{\includegraphics[width=1in,height=1.25in,clip,keepaspectratio]{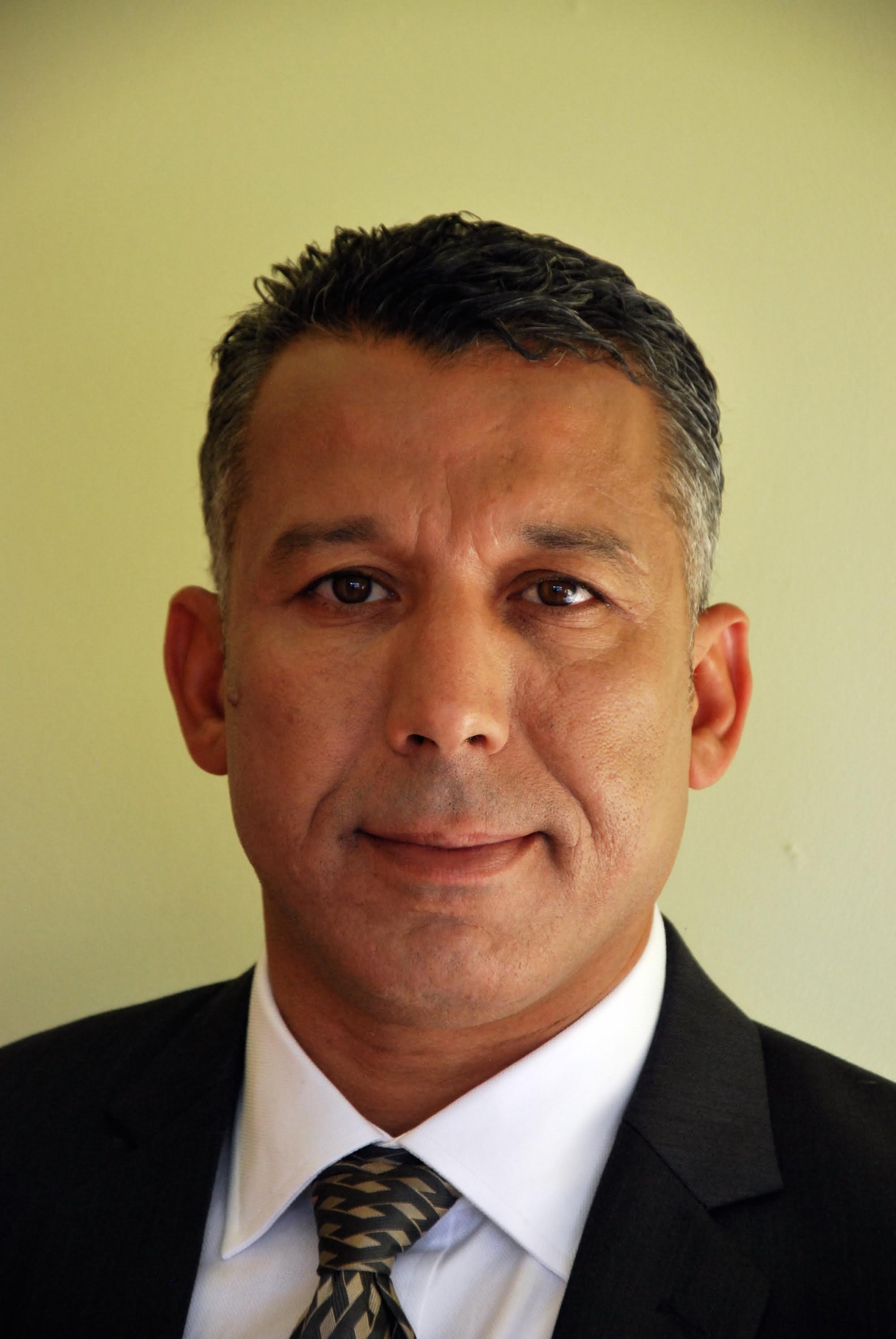}}]{Raouf Boutaba}
received the M.Sc. and Ph.D.
degrees in computer science from the University
Pierre \& Marie Curie, Paris, in 1990 and 1994,
respectively.
He is currently a university chair, a professor of computer science and the Associate Dean Innovation \& Entrepreneurship of the
faculty of Mathematics at the University of Waterloo (Canada). His research interests include
resource management in wired and wireless networks. He was the founding editor in chief of the IEEE transactions on Network and Service Management (2007-2010) and the current editor in chief of the IEEE journal on selected
areas in communications. He is a fellow of the IEEE, the Engineering
Institute of Canada, and the Canadian Academy of Engineering.
\end{IEEEbiography}

\begin{IEEEbiography}[{\includegraphics[width=1in,height=1.25in,clip,keepaspectratio]{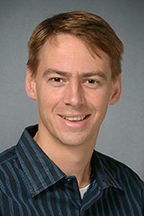}}]{Steven L. Waslander} received the B.Sc.E.
degree from Queen's University, Kingston, ON,
Canada, in 1998, and the M.S. and Ph.D. degrees
from Stanford University in Aeronautics and Astronautics, Stanford, CA, USA, in 2002 and 2007,
respectively.
As a graduate student at Stanford University
in Aeronautics and Astronautics, he created the
Stanford Testbed of Autonomous Rotorcraft for
Multi-Agent Control, the world’s most capable outdoor multi-vehicle quadrotor platform at the time.
He was a Control Systems Analyst for Pratt \& Whitney Canada, Longueuil,
QC, Canada, from 1998 to 2001. He was recruited to the University of
Waterloo, Waterloo, ON, Canada, in 2008, and moved to the University of
Toronto, Toronto, ON, Canada, in 2018, where he founded and directs the
Toronto Robotics and Artificial Intelligence Laboratory. He is currently an
Associate Professor with the University of Toronto Institute for Aerospace
Studies, Toronto, where he is a Leading Authority on autonomous aerial
and ground vehicles, simultaneous localization and mapping, and multivehicle systems. His current research interests include the state of the art in
autonomous drones and autonomous driving through advances in localization
and mapping, object detection and tracking, integrated planning and control
methods, and multirobot coordination.

\end{IEEEbiography}




\end{document}